\documentclass[12pt]{article}

\usepackage[margin=1.05in]{geometry}
\usepackage{amsmath}
\usepackage{amssymb}
\usepackage{amsthm}
\usepackage{dsfont}
\usepackage{graphicx} %
\usepackage{microtype}
\usepackage{caption}

\usepackage{seqsplit}

\usepackage{authblk}

\usepackage[cal=euler]{mathalpha}

\usepackage{setspace}
\allowdisplaybreaks
\makeatletter
\g@addto@macro\normalsize{%
    \setlength\abovedisplayskip{10pt}
    \setlength\belowdisplayskip{10pt}
    \setlength\abovedisplayshortskip{10pt}
    \setlength\belowdisplayshortskip{10pt}
}
\makeatother

\usepackage[dvipsnames]{xcolor}
\usepackage[backref=page]{hyperref}
\hypersetup{
    colorlinks=true,
    linkcolor=cyan!80!black,
    citecolor=MidnightBlue,
    urlcolor=magenta,
}

\usepackage{tikz}
\usetikzlibrary{matrix, positioning, decorations.pathreplacing, calc, arrows.meta}
\usepackage{natbib}

\usepackage{apptools}
\AtAppendix{\counterwithin{lemma}{section}}

\newcommand{\bbR}{\mathbb{R}}

\newcommand{\ba}{\mathbf{a}}
\newcommand{\bb}{\mathbf{b}}

\newcommand{\bh}{\mathbf{h}}
\newcommand{\bx}{\mathbf{x}}

\newcommand{\bz}{\mathbf{z}}

\newcommand{\balpha}{\boldsymbol{\alpha}}

\newcommand{\cC}{\mathcal{C}}

\newcommand{\ind}{\mathds{1}}

\theoremstyle{plain}
\newtheorem{theorem}{Theorem}[section]

\theoremstyle{definition}

\newtheorem{definition}{Definition}[section]

\theoremstyle{remark}
\newtheorem{remark}{Remark}[section]

\let\tilde\widetilde

\title{\textbf{Approximation Theory for Neural Networks: Old and New}}

\author{Soumendu Sundar Mukherjee}
\author{Himasish Talukdar}
\affil{Statistics and Mathematics Unit, Indian Statistical Institute, Kolkata}
\affil[ ]{\texttt{ssmukherjee@isical.ac.in, talukdar.himasish@gmail.com}}
\date{}

\begin{document}

\maketitle

\begin{abstract}
Universal approximation theorems provide a mathematical explanation for the expressive power of neural networks. They assert that, under mild conditions on the activation function, feedforward neural networks are dense in broad function classes, such as continuous functions on compact subsets of $\mathbb{R}^d$, $L^p$ spaces, or Sobolev spaces. Over the past four decades, these qualitative universality results have evolved into a rich quantitative theory addressing approximation rates, parameter efficiency, and the role of architectural features such as depth and width. This survey presents several glimpses into this theory. We review classical density results for single-hidden-layer networks, as well as quantitative bounds that relate approximation error to network size and smoothness assumptions on target functions. Particular emphasis is placed on depth--width trade-offs and on results demonstrating that deeper architectures can achieve superior parameter efficiency for structured function classes. In addition to standard feedforward neural networks, we also review recent developments on Kolmogorov--Arnold Networks (KANs), which offer an alternative architectural paradigm and whose approximation-theoretic properties have begun to attract significant theoretical attention.
\end{abstract}

\smallskip
\textbf{Keywords:} Universal approximation, feedforward neural networks, depth--width trade-off, Kolmogorov--Arnold networks
\smallskip

\section{Introduction}\label{sec:intro}
Neural networks are the dominant function approximation paradigm in modern machine learning and artificial intelligence. In essence, neural networks are classes of parameterised nonlinear functions, constructed by composing affine transformations with nonlinear activation functions. In their simplest \emph{feedforward} form, a neural network maps an input $\bx \in \mathbb{R}^d$ to an output through successive layers
\[
    \bz_k = \sigma(W_k \bz_{k-1} + b_k), \qquad \bz_0 = \bx,
\]
where $W_k$ and $b_k$ are learnable weight matrices and bias vectors, and $\sigma$ is a \emph{fixed nonlinear activation function} applied componentwise. The final layer is often taken to be linear, producing an output such as $\balpha^\top \bz_K$ (see Figure~\ref{fig:FNN-schematic} for a schematic representation of this architecture). By varying the network width, depth, and parameters, neural networks define rich families of functions. A natural question then is:
\begin{quote}
    Given a target class of functions, are a given class of neural networks dense in it? How efficient and parsimonious can the representation be?
\end{quote}

Universal Approximation Theorems (UATs) answer the above questions by making the expressivity of neural nets mathematically precise. They show that, under mild assumptions on the activation function $\sigma$, neural networks are dense in suitable function spaces, typically continuous functions on compact subsets of $\mathbb{R}^d$ with respect to the uniform norm, $L^p$ spaces, or Sobolev spaces. In this sense, neural networks can approximate most reasonable target functions on compact domains to arbitrary accuracy.

It is important to distinguish between qualitative and quantitative forms of universality. Classical UATs established density, guaranteeing the existence of approximating networks. Subsequent works have refined these results by providing approximation rates and by analysing how architectural features such as depth and width influence representational efficiency. In particular, while $1$-layer networks (i.e., networks with a single hidden layer) are already universal under mild conditions, deeper architectures can in many cases achieve the same approximation accuracy with substantially fewer parameters. This survey focuses on these approximation-theoretic aspects of neural networks, rather than on questions of optimization or statistical generalization. While the latter two topics are of paramount importance for deep learning practice, approximation-theoretic results inform neural network design by revealing what is achievable and what is not.

While our primary focus will be on feedforward neural networks (FNNs)\footnote{We shall use the terms \emph{feedforward neural networks} and \emph{multilayer perceptrons} interchangeably.}, we will also discuss the recently popularised Kolmogorov--Arnold Networks (KANs), whose design is inspired by the Kolmogorov--Arnold representation theorem. In contrast to standard FNNs, where nonlinearities are fixed and applied at the nodes, KANs place learnable univariate functions on the edges, thereby replacing fixed activations with adaptive functional transformations. From the perspective of universal approximation, it is therefore natural to ask how KANs compare to classical FNNs in terms of approximation rates and parameter efficiency. A definitive answer is yet to be found!

\textbf{Note for readers.} For this survey, although we shall define everything, some basic familiarity with neural network vocabulary will be helpful. We shall also assume that the reader has basic knowledge of the Fourier transform and a few fundamental results from functional analysis (e.g., the Riesz representation theorem, the Hahn--Banach theorem, etc.). Even if the reader chooses to skip the parts where these are used, a certain level of mathematical maturity is desirable for smooth sailing.

\begin{figure}
\begin{minipage}{.38\textwidth}
    \centering
    \begin{tikzpicture}[scale=2]
    \draw[->] (-0.2,0) -- (1.5,0) node[right] {$x_1$};
    \draw[->] (0,-0.2) -- (0,1.5) node[above] {$x_2$};
    \draw[step=1cm,gray,very thin,dashed] (0,0) grid (1,1);

    \filldraw[black] (0,0) circle (2pt) node[below left] {$(0,0)$};
    \filldraw[black] (1,1) circle (2pt) node[above right] {$(1,1)$};

    \draw[thick] (0.9, -0.1) -- (1.1, 0.1);
    \draw[thick] (0.9, 0.1) -- (1.1, -0.1);
    \node[below right] at (1,0) {$(1,0)$};

    \draw[thick] (-0.1, 0.9) -- (0.1, 1.1);
    \draw[thick] (-0.1, 1.1) -- (0.1, 0.9);
    \node[above left] at (0,1) {$(0,1)$};
\end{tikzpicture}
    \captionof{figure}{Linear non-separability of XOR is visually evident in $\mathbb{R}^2$, where no single line can isolate the classes. Here,  circles represent output $0$, crosses $1$.}
    \label{fig:xor-linear-non-separability}
\end{minipage}%
\hspace{5mm}
\begin{minipage}{0.58\textwidth}
    \centering
    \begin{tikzpicture}[
    plain/.style={
      draw=none,
      fill=none,
    },
    net/.style={
      matrix of nodes,
      nodes={
        draw,
        circle,
        inner sep=8pt
      },
      nodes in empty cells,
      column sep=2cm,
      row sep=0.5cm
    },
    >=stealth
]

\node[circle, fill=green!20, draw, minimum size=1cm] (x1) at (0,-1) {$x_1$};
\node[circle, fill=green!20, draw, minimum size=1cm] (x2) at (0,-3) {$x_2$};

\node[circle, fill=blue!20, draw, minimum size=1cm] (h11) at (3.5,0) {$h_{11}$};
\node[circle, fill=blue!20, draw, minimum size=1cm] (h12) at (3.5,-4) {$h_{12}$};

\node[circle, fill=red!20, draw, minimum size=1cm] (y) at (6,-2) {$y$};

\node[below=2mm of h11] (b11) {$1.5$};
\node[below=2mm of h12] (b12) {$-0.5$};
\node[below=2mm of y] (b_out) {$-1.5$};

\draw[->] (x1) -- (h11) node[midway, above] {$-1$};
\draw[->] (x1) -- (h12) node[midway, above] {$1$};
\draw[->] (x2) -- (h11) node[midway, above] {$-1$};
\draw[->] (x2) -- (h12) node[midway, above] {$1$};

\draw[->] (h11) -- (y) node[midway, above] {$1$};
\draw[->] (h12) -- (y) node[midway, below] {$1$};

\draw[->, thick] (y.east) -- ++(2,0) node[midway, above] {};

\node[above] at (4.5, 0.3) {NAND};
\node[above] at (4.35, -4.7) {OR};
\node[above] at (6.9, -1.7) {AND};
\end{tikzpicture}
    \captionof{figure}{A multilayer perceptron representing XOR. The numbers beside the edges denote the corresponding weights and the number below nodes denote the biases. The hidden layer transforms the input space into a feature space where the classes become linearly separable. The two hidden units/neurons representing the NAND and OR gates. The outputs of these are then passed though another neuron that implements the AND gate.
    }
    \label{fig:xor-mlp}
\end{minipage}
\end{figure}

\subsection{Early neural networks and the role of depth}

Early mathematical models of neural networks date back to the work of \cite{mcculloch1943logical}, who proposed simple threshold units as models of biological neurons. Building on this idea, \cite{rosenblatt1958perceptron} introduced the perceptron, a mapping of the form
\[
    \mathbf{x} \mapsto \ind(\mathbf{w}^\top \mathbf{x} + b \ge 0),
\]
which is essentially a linear classifier\footnote{We denote by $\ind(\mathcal{E})$ the indicator of a condition $\mathcal{E}$. We also use $\ind_A$ to denote the indicator function of a set $A$.}. Geometrically, a perceptron partitions the input space by a single affine hyperplane. Therefore, while perceptrons can successfully represent linearly separable outputs, they fail on simple linearly non-separable problems. A classical example is the XOR gate, which outputs $1$ if exactly one of the two binary inputs is $1$, and $0$ otherwise. As illustrated in Figure~\ref{fig:xor-linear-non-separability}, no single affine hyperplane in $\mathbb{R}^2$ can separate the two classes (represented by the outputs), and hence a single perceptron cannot represent XOR.

This limitation motivates the introduction of multilayer architectures. By adding a hidden layer with several perceptron units (also called neurons), the input is first mapped into a feature space in which the classes become linearly separable. In this way, even a simple network with one hidden layer (see Figure~\ref{fig:xor-mlp} for an example) can represent the XOR gate exactly\footnote{In fact, \cite{baum1988capabilities} showed that one can separate $N$ arbitrary points in $\bbR^d$ into two classes by a perceptron network with one hidden layer consisting of $\lceil N / d \rceil$ neurons. This result is optimal, in that there are examples of dichotomies not realizable with fewer neurons.}. This example illustrates a fundamental principle: depth allows neural networks to construct intermediate representations, enabling them to model functions that are inaccessible to purely linear threshold units. 

\subsection{A selective review of the history of UATs for FNNs}

One of the first general UATs was due to \cite{cybenko1989approximation}, who proved that $1$-layer networks with continuous sigmoidal activations are dense in the space of continuous functions on compact subsets of $\mathbb{R}^d$. \cite{funahashi1989approximate} arrived at the same conclusion for continuous and monotone sigmoidal activations. \cite{hornik1989multilayer} also proved the same density result using monotone sigmoidal activations. Remarkably, these three almost concurrent works use quite different techniques. Later, \cite{hornik1991approximation} showed that taking bounded, non-constant, continuous activations suffices for approximating continuous functions on compact sets. He also obtained density results with respect to $L^p$ norms for bounded non-constant activations. The precise characterisation of which activations suffice for universal approximation of continuous functions on compact sets, namely, any continuous, non-polynomial function, was subsequently given by \cite{leshno1993multilayer}. These works collectively establish that shallow networks with a single hidden layer possess remarkable expressive power in principle. For a more detailed commentary on these early works, and further references, we refer the reader to the survey \cite{pinkus1999approximation}.

While density ensures the existence of approximating networks, it does not quantify the number of neurons required. \cite{barron1993universal} addressed this gap by identifying a natural function class, characterised by a certain bounded Fourier norm condition, for approximation by $1$-layer networks and showing that such functions can be approximated in $L^2$ over any closed ball at a rate $\mathcal{O}(m^{-1/2})$, where $m$ is the number of neurons. \cite{makovoz1996random} sharpened this analysis by proving an improved rate of $\mathcal{O}(m^{-1/2 - 1/(2d)})$ in $L^2$, with matching lower bounds, and extended the approximation results to $L^p$ norms. \cite{mhaskar1996neural} established optimal-order approximation rates on Sobolev classes, showing that $1$-layer networks achieve the best possible approximation error for functions with a prescribed number of derivatives. More recently, \cite{de2021approximation} derived UATs for $1$-layer networks with $\tanh$ activation in Sobolev norms; these results are particularly relevant to scientific computing applications such as Physics-Informed Neural Networks (PINNs) for solving PDEs.

Due to the dramatic empirical success of deep learning \citep{lecun2015deep}, a central question in modern neural network approximation theory is to understand what expressivity advantages deeper architectures bring over shallower ones. \cite{montufar2014number} gave an early quantification of this for ReLU networks by showing that the number of linear regions induced in the input space grows exponentially with depth under mild width assumptions. \cite{bianchini2014complexity} compared deep and shallow networks with the same number of hidden neurons in terms of certain topological complexity measures of the functions they represent, establishing that deeper architectures can realise functions of higher complexity. \cite{telgarsky2016benefits} gave a striking illustration: there are explicit functions efficiently representable by deep networks of bounded width that require exponentially many neurons in shallower architectures. \cite{eldan2016power} constructed functions representable as $2$-layer networks of polynomial width in the input dimension that cannot be approximated by $1$-layer networks unless their width is exponential in the input dimension. \cite{poole2016exponential, raghu2017expressive} further studied the geometric complexity of network mappings, demonstrating that depth exponentially amplifies the trajectory length and curvature of input signals. On the quantitative side, various authors have established explicit error bounds for approximation by deep ReLU networks, demonstrating that depth can substantially reduce the number of neurons needed to achieve a specified accuracy \citep{yarotsky2017error, yarotsky2018optimal, petersen2018optimal, kohler2021rate, daubechies2022nonlinear, siegel2023optimal}. Another important line of work concerns characterising the minimum width required for approximation by deep networks, beginning with \cite{lu2017expressive}. These results collectively give ample mathematical support to the view that depth fundamentally increases the representational capacity of neural networks.

\subsection{Other surveys}
Universal approximation has been studied extensively over the past four decades, and several works provide broad overviews of the classical theory. The already-mentioned survey \cite{pinkus1999approximation} offers a comprehensive account of the approximation theory for multilayer perceptrons developed up to the late 1990s. This work remains a foundational reference for the approximation-theoretic perspective on neural networks. A slightly earlier survey covering some of the same topics is \cite{scarselli1998universal}.

Since these early surveys, the literature has expanded significantly, particularly in the direction of quantitative approximation rates, expressivity and separation results, and depth--width trade-offs. However, many of these developments are dispersed across specialised research articles rather than consolidated expository treatments. One aim of the present survey is therefore to present a short and readable gateway to modern results. Due to the vastness of the literature, we had to be selective. The readers can find further relevant works by consulting the bibliographies of the papers cited here. Recommended is also the modern and much more expansive survey of \cite{devore2021neural}, which focuses on comparing neural network approximation with other traditional methods such as approximation by polynomials, wavelets or splines.

\subsection{Organisation}
The rest of this survey is organised as follows. In Section~\ref{sec:fnn} we look at UATs for FNNs, beginning with basic density results and then moving on to quantitative approximation bounds; also discussed are modern quantitative results highlighting the role of depth and width. In Section~\ref{sec:kan}, we discuss KANs and their approximation theory. We end the survey with some concluding remarks in Section~\ref{sec:conc}.

\section{Feedforward Neural Networks}\label{sec:fnn}
\subsection{Formal definitions}\label{sec:defn}
We begin with some basic definitions. We are interested in mappings from $\mathbb{R}^d$ to $\mathbb{R}$ of the form
\begin{equation}\label{eq:defn_single_layer}
    \mathbf{x} \;\mapsto\; \sum_{j=1}^m a_j \,\sigma(\mathbf{w}_j^\top \mathbf{x} + b_j),
\end{equation}
where $\mathbf{w}_j \in \mathbb{R}^d$ and $a_j, b_j \in \mathbb{R}$ for $j=1,\ldots,m$. The function $\sigma:\mathbb{R}\to\mathbb{R}$ is called the \emph{activation function}. Typical examples include the ReLU function $z\mapsto \max\{0,z\}$ and the sigmoid function $z\mapsto \frac{1}{1+e^{-z}}$. The map $\mathbf{x} \;\mapsto\; \sigma(\mathbf{w}_j^\top \mathbf{x} + b_j)$ defines a \emph{neuron}.

In this setting, $\mathbf{x}$ denotes the input data, and the collection $((a_j,\mathbf{w}_j,b_j))_{j=1}^m$ forms the set of trainable parameters, which are optimised using observed data. The integer $m$ is called the \emph{width} of the network.

It is convenient to express \eqref{eq:defn_single_layer} in matrix–vector form. Let $W$ be the $m\times d$ matrix whose $j$-th row is $\mathbf{w}_j^\top$. For $\mathbf{y}=(y_1,\ldots,y_m)^\top\in\mathbb{R}^m$, define
\[
\sigma(\mathbf{y}) := (\sigma(y_1),\ldots,\sigma(y_m))^\top,
\]
so that $\sigma$ acts coordinatewise. Writing $\mathbf{a}=(a_1,\ldots,a_m)^\top$ and $\mathbf{b}=(b_1,\ldots,b_m)^\top$, the mapping in \eqref{eq:defn_single_layer} can be written compactly as
\[
\mathbf{x} \;\mapsto\; \mathbf{a}^\top \sigma(W\mathbf{x}+\mathbf{b}).
\]

Under our convention, a network of the form \eqref{eq:defn_single_layer} is called a \emph{$1$-layer neural network}. It consists of a single nonlinear \emph{hidden layer} followed by a linear combination of its outputs\footnote{Many authors call this architecture a 2-layer network, accounting for the output layer which also involves learnable parameters. However, we shall only count the hidden layers, as they are what give neural networks their expressivity. Under this convention, the perceptron would be a $0$-layer neural network.}.

\begin{figure}[!t]
\centering
\begin{tikzpicture}[
    scale=0.95,
    neuron/.style={circle, draw, minimum size=16pt, inner sep=0pt, align=center},
    input/.style={neuron, fill=green!10, draw=green!60!black},
    hidden/.style={neuron, fill=blue!10, draw=blue!60!black},
    output/.style={neuron, fill=red!10, draw=red!60!black},
    arrow/.style={-{Stealth[scale=1.0]}, shorten >=1pt, semithick, color=gray!80},
    label text/.style={font=\footnotesize\sffamily, color=gray!70}
]

    \def\layersep{2.5cm} %
    \def\nodesep{1.2cm}  %

    \node[input] (I-1) at (0,0) {$x_1$};
    \node[input] (I-2) at (0,-\nodesep) {$x_2$};
    \node (I-dots) at (0,-2*\nodesep) {$\vdots$};
    \node[input] (I-n) at (0,-3*\nodesep) {$x_n$};
    
    \node[above=0.2cm of I-1] {Input };

    \node[hidden] (H1-1) at (\layersep, \nodesep*0.5) {};
    \node[hidden] (H1-2) at (\layersep, -\nodesep*0.5) {};
    \node (H1-dots) at (\layersep, -1.5*\nodesep) {$\vdots$};
    \node[hidden] (H1-m) at (\layersep, -3.5*\nodesep) {};
    
    \node[above=0.2cm of H1-1, align=center] {Layer 1};

    \node[hidden] (H2-1) at (2*\layersep, \nodesep*0.5) {};
    \node[hidden] (H2-2) at (2*\layersep, -\nodesep*0.5) {};
    \node (H2-dots) at (2*\layersep, -1.5*\nodesep) {$\vdots$};
    \node[hidden] (H2-m) at (2*\layersep, -3.5*\nodesep) {};

    \node[above=0.2cm of H2-1, align=center] {Layer 2};

    \node (dots-1) at (3*\layersep, \nodesep*0.5) {$\dots$};
    \node (dots-2) at (3*\layersep, -\nodesep*0.5) {$\dots$};
    \node (dots-m) at (3*\layersep, -3.5*\nodesep) {$\dots$};
    
    \node[hidden] (HL-1) at (4*\layersep, \nodesep*0.5) {};
    \node[hidden] (HL-2) at (4*\layersep, -\nodesep*0.5) {};
    \node (HL-dots) at (4*\layersep, -1.5*\nodesep) {$\vdots$};
    \node[hidden] (HL-m) at (4*\layersep, -3.5*\nodesep) {};
    
    \node[above=0.2cm of HL-1, align=center] {Layer $L$};

    \node[output] (O) at (5.5*\layersep, -1.5*\nodesep) {$y$};
    \node[above=0.5cm of O] {Output};

    \foreach \i in {I-1, I-2, I-n}
        \foreach \j in {H1-1, H1-2, H1-m}
            \draw[arrow] (\i) -- (\j);
    \path (I-1) -- (H1-m) node [midway, above=1.5cm] {$W_1$};

    \foreach \i in {H1-1, H1-2, H1-m}
        \foreach \j in {H2-1, H2-2, H2-m}
            \draw[arrow] (\i) -- (\j);
    \path (H1-1) -- (H2-m) node [midway, above=1.5cm] {$W_2$};

    \foreach \i in {H2-1, H2-2, H2-m}
        \foreach \j in {dots-1, dots-2, dots-m}
            \draw[arrow, shorten >=5pt] (\i) -- (\j);
    \path (H2-1) -- (dots-m) node [midway, above=1.5cm] {$W_3$};

    \foreach \i in {dots-1, dots-2, dots-m}
        \foreach \j in {HL-1, HL-2, HL-m}
            \draw[arrow, shorten <=5pt] (\i) -- (\j);
    \path (dots-1) -- (HL-m) node [midway, above=1.5cm] {$W_L$};

    \foreach \i in {HL-1, HL-2, HL-m}
        \draw[arrow, draw=red!50!black] (\i) -- (O);
    \node[text=red!60!black, font=\small] at (4.75*\layersep, 0) {$\ba_L$};

    \node[below=0.8cm of H1-m, font=\small] {$\sigma(W_1 \bx + \bb_1)$};
    
    \node[below=0.8cm of H2-m, font=\small] {$\sigma(W_2 \bh_1 + \bb_2)$};
    
    \node[below=0.8cm of HL-m, font=\small] {$\sigma(W_L \bh_{L-1} + \bb_L)$};
    
    \node[below=1.0cm of O, text=red!60!black, font=\small] {$y = \ba_L^T \sigma(W_L \bh_{L-1} + \bb_L)$};
\end{tikzpicture}
\caption{Schematic of a feedforward neural network with $L$ (hidden) layers.}
\label{fig:FNN-schematic}
\end{figure}

A multilayer feedforward neural network is obtained by composing several such hidden layers. More precisely, let $L\ge 1$. An \emph{$L$-layer neural network} is a mapping $F:\mathbb{R}^d\to\mathbb{R}$ defined recursively by
\[
    \mathbf{h}_0 := \mathbf{x}, \qquad
    \mathbf{h}_\ell := \sigma(W_\ell \mathbf{h}_{\ell-1} + \mathbf{b}_\ell), 
    \quad \ell=1,\ldots,L,
\]
and
\[
    F(\mathbf{x}) := \mathbf{a}_L^\top \mathbf{h}_L,
\]
where for each $\ell$, $W_\ell$ is a linear transformation matrix of appropriate dimension and $\mathbf{b}_\ell$ is a bias vector. The activation function $\sigma$ is applied coordinatewise.

The integer $L$ is called the \emph{depth} of the network (i.e., the number of \emph{hidden layers}), while the widths of the intermediate layers are determined by the dimensions of the matrices $W_\ell$. The case $L=1$ coincides with the shallow architecture in \eqref{eq:defn_single_layer}, whereas larger values of $L$ correspond to deeper networks obtained through repeated composition of affine maps and nonlinear activations. The total number of hidden neurons in a network is referred to as its \emph{size}.

\begin{remark}
    While we have used a single activation function $\sigma$ for the whole architecture, it is of course possible to allow different activations for different neurons. Thus, with such a modified definition, the map 
    \[
        \mathbf{x} \;\mapsto\; \sum_{j=1}^m a_j \,\sigma_j(\mathbf{w}_j^\top \mathbf{x} + b_j)
    \]
    would define a single-layer network.
\end{remark}

\subsection{Some basic UATs}\label{sec:basic}
In this subsection, we look at several basic UATs and prove some of them. We then state a result that provides a necessary and sufficient condition on the activation function for UATs to hold. For functions $f: \mathbb{R}^d \to \mathbb{R}$ and a measurable set
$D \subset \mathbb{R}^d$, define
\[
    \|f\|_{\infty, \, D} := \sup_{\mathbf{x} \in D} |f(\mathbf{x})|.
\]
We denote by $C(K)$ the set of continuous functions on a compact set $K$. Then $\|f\|_{\infty, \, K}$ is the uniform norm on $C(K)$.

\subsubsection{Two UATs for continuous functions}
We begin with one of the easiest (but somewhat inefficient) UATs one can prove.
\begin{theorem}\label{thm:lipschitz}
Let $f : [0,1] \to \mathbb{R}$ be a function with Lipschitz (with respect to the Euclidean norm) constant $C_f$.  
For any $\epsilon > 0$, there exists a $1$-layer neural network $F$ with width $\big\lceil \frac{C_f}{\epsilon}\big\rceil$ and the Heaviside activation function  $\sigma(x) := \ind_{[0, \infty)}(x)$, such that
\[
    \|F - f\|_{\infty, \, [0,1]} \le \epsilon.
\]
\end{theorem}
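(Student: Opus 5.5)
The plan is to build a staircase function: partition $[0,1]$ into $m := \big\lceil \frac{C_f}{\epsilon}\big\rceil$ subintervals, approximate $f$ on each by a constant, and realise the resulting step function exactly as a sum of $m$ shifted Heaviside units. Since $1/m \le \epsilon/C_f$, the Lipschitz condition will control the error on each piece.

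\textbf{Construction.} Set $t_k := k/m$ for $k=0,\ldots,m$, and let $c_k := (t_{k-1}+t_k)/2$ be the midpoint of $I_k := [t_{k-1},t_k)$, with $I_m$ taken closed at $1$. Define
\[
F(x) := f(c_1)\,\sigma(x - t_0) + \sum_{k=2}^{m} \big(f(c_k)-f(c_{k-1})\big)\,\sigma(x - t_{k-1}).
\]
This has exactly the form \eqref{eq:defn_single_layer} with width $m$, weights $w_k=1$, biases $b_k=-t_{k-1}$ and outer coefficients $a_1 = f(c_1)$, $a_k = f(c_k)-f(c_{k-1})$ for $k \ge 2$. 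Every neuron uses $\sigma = \ind_{[0,\infty)}$, as required.

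\textbf{Exact evaluation.} For $x\in I_k$ we have $\sigma(x-t_{j-1}) = 1$ exactly when $j \le k$, and $\sigma(x-t_{j-1}) = 0$ otherwise. The sum therefore telescopes to $F(x) = f(c_k)$. At $x=1$ every neuron fires, giving $F(1)=f(c_m)$. The half-open convention for the Heaviside function at $0$ is what makes this bookkeeping consistent at the breakpoints.

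\textbf{Error bound.} For $x \in I_k$,
\[
|F(x)-f(x)| = |f(c_k)-f(x)| \le C_f|x-c_k| \le \frac{C_f}{2m} \le \frac{\epsilon}{2} \le \epsilon.
\]
Taking the supremum over $x \in [0,1]$ gives $\|F-f\|_{\infty,[0,1]}\le\epsilon$. Using left endpoints instead of midpoints would also work, giving the bound $C_f/m \le \epsilon$ directly.

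There is no real obstacle here. The only points needing care are the endpoint conventions just mentioned and the degenerate case $C_f = 0$. In that case $\lceil C_f/\epsilon\rceil = 0$ and $f$ is constant. If $f \equiv 0$, the empty network works. Otherwise one reads the width as $\max\{1,m\}$ and uses the single neuron $f(0)\,\sigma(x)$, which reproduces $f$ exactly on $[0,1]$.
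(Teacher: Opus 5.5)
Your proposal is correct and is essentially the paper's proof: the same staircase $\sum_j a_j\,\sigma(x - t_j)$ with telescoping outer coefficients. The only difference is that the paper samples $f$ at the left endpoints $t_j = j/m$, which gives the bound $C_f/m \le \epsilon$ (exactly the variant you mention at the end), whereas your midpoints give $\epsilon/2$. Your remark on $C_f = 0$ also correctly flags a degenerate case the paper's proof skips: there $m = 0$, so $t_j = j/m$ is undefined, and a width-$0$ network cannot reproduce a nonzero constant.
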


\begin{proof}
Fix $\epsilon >0$ and set $m := \big\lceil\frac{C_f}{\epsilon} \big\rceil$. Define the partition points
\[
    t_{j} := \frac{j}{m}, \quad\text{for}\,\, j= 0,1,\ldots,m-1.
\]
Introduce the coefficients
\begin{align*}
    w_0 &:= f(t_0),\\
    w_j &:= f(t_j) - f(t_{j-1}), \quad\text{for}\,\, j = 1,\dots,m-1,
\end{align*}
and define for $x\in [0,1]$,
\[
    F(x) := \sum_{j=0}^{m-1} w_j \, \ind_{[t_j, \infty)}(x) = \sum_{j=0}^{m-1} w_j \, \sigma(x-t_j).
\]

Fix $x \in [0,1]$ and let $\ell$ be the largest index $0 \le j \le m - 1$ such that $t_j \le x$.  
By construction, the function $F$ is constant on the interval $[t_\ell, t_{\ell+1})$ which contains $x$, and thus $F(x) = F(t_{\ell}) = f(t_\ell)$. Therefore
\begin{align*}
    |F(x) - f(x)|
    &=  |f(t_{\ell}) - f(x)|\\
    &\le C_f |x - t_\ell|\\
    & \le C_f |t_{\ell+1} - t_{\ell}|\\
    &= \frac{C_f}{m} \\
    &\le \epsilon,
\end{align*}
where the first inequality follows from the fact that $f$ is $C_f$-Lipschitz. This completes the proof.
\end{proof}

An obvious higher-dimensional analogue would involve functions for the form\footnote{This provides an example of a so-called $\Sigma\Pi$ network, allowing sums and products of activation functions, for which universal approximation results can be found in \cite{hornik1989multilayer}.} $\mathbf{x} \mapsto \prod_{j = 1}^d \ind_{[t_j, \infty)}(x_j)$, which are not activation functions per se, but it turns out they can be approximated in $L^1$ norm by 2-layer neural nets with ReLU activation. This gives the following result (a proof can be found in the lecture notes \cite{mjt_dlt}).
\begin{theorem}\label{thm:cont_func_mult}
For any continuous function $f : \mathbb{R}^{d} \to \mathbb{R}$ and $\epsilon > 0$, there exists a $2$-layer ReLU network $F$ such that
\[
\int_{[0,1]^d} |F(\mathbf{x})-f(\mathbf{x})|\,d\mathbf{x} \le \epsilon .
\]
\end{theorem}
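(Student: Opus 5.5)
We follow the hint preceding Theorem~\ref{thm:cont_func_mult}: approximate $f$ by a piecewise constant function on a grid of boxes, write each box indicator as a product of one-dimensional step functions, and then replace each box indicator by a small ReLU network that is close in $L^1$.

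\textbf{Step 1 (reduction to step functions).} Since $f$ is continuous on the compact set $[0,1]^d$, it is uniformly continuous there. Hence for a fine enough grid of side $1/m$, with boxes $B_{\mathbf{k}}=\prod_{i=1}^d[k_i/m,(k_i+1)/m)$, the function
\[
g:=\sum_{\mathbf{k}} f(\mathbf{k}/m)\,\ind_{B_{\mathbf{k}}}
\]
satisfies $\|g-f\|_{\infty,[0,1]^d}\le \epsilon/2$. In particular $\int_{[0,1]^d}|g-f|\le\epsilon/2$. This is the $d$-dimensional version of the argument in Theorem~\ref{thm:lipschitz}. It therefore suffices to approximate each $\ind_{B_{\mathbf{k}}}$ in $L^1([0,1]^d)$ to accuracy $\epsilon/(2M N)$, where $N=m^d$ is the number of boxes and $M=\|f\|_{\infty,[0,1]^d}+1$.

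\textbf{Step 2 (one-dimensional bumps, first layer).} For an interval $[a,b)$ and small $\delta>0$, define the trapezoid
\[
\tau(t)=\tfrac1\delta\big(\mathrm{ReLU}(t-a+\delta)-\mathrm{ReLU}(t-a)-\mathrm{ReLU}(t-b)+\mathrm{ReLU}(t-b-\delta)\big).
\]
It takes values in $[0,1]$, equals $1$ on $[a,b]$, and vanishes outside $[a-\delta,b+\delta]$. It is a linear combination of four first-layer ReLU neurons applied to $x_i$, so $\int|\tau-\ind_{[a,b)}|\le 2\delta$.

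\textbf{Step 3 (the product via a second layer; the main obstacle).} Products are not available directly. The key observation is that for values $u_i\in[0,1]$ that are (mostly) $0$ or $1$, the product $\prod_i u_i$ agrees with $\mathrm{ReLU}\big(\sum_i u_i-(d-1)\big)$. So define
\[
h_{\mathbf{k}}(\mathbf{x})=\mathrm{ReLU}\Big(\sum_{i=1}^d\tau_{k_i}(x_i)-(d-1)\Big).
\]
This is a second-layer neuron fed by first-layer outputs. It takes values in $[0,1]$. It equals $1$ on $B_{\mathbf{k}}$, and it equals $0$ whenever some coordinate has $\tau_{k_i}(x_i)=0$. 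Consequently $h_{\mathbf{k}}$ differs from $\ind_{B_{\mathbf{k}}}$ only on the set where some $x_i$ lies in a $\delta$-neighbourhood of an endpoint. That set has measure at most $4d\delta$, so $\int|h_{\mathbf{k}}-\ind_{B_{\mathbf{k}}}|\le 4d\delta$. The delicate point is checking this boundary bookkeeping; it works because every quantity involved is bounded by $1$.

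\textbf{Step 4 (assembly).} Set $F=\sum_{\mathbf{k}} f(\mathbf{k}/m)\,h_{\mathbf{k}}$. This is a $2$-layer ReLU network: the first layer contains all the ReLU units used in the $\tau$'s (the $4$ neurons per coordinate interval), the second layer contains the $N$ units $h_{\mathbf{k}}$, and the output layer is linear. By the triangle inequality,
\[
\int_{[0,1]^d}|F-f|\le \epsilon/2+ M N\cdot 4d\delta.
\]
Choosing $\delta\le\epsilon/(8dMN)$ makes the right-hand side at most $\epsilon$, which completes the plan.
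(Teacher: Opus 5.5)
Your proposal is correct and follows essentially the same route the paper sketches: a uniform-continuity grid of cells, each cell indicator approximated in $L^1$ by first-layer ReLU ramps in each coordinate, and a second-layer ReLU with threshold $d-1$ acting as an AND, giving on the order of $m^d$ neurons. One small nit is that your half-open boxes miss the null set where some $x_i = 1$, so $\|g-f\|_{\infty,[0,1]^d}\le\epsilon/2$ holds only almost everywhere; that is all the $L^1$ bound needs.
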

\begin{remark}
    In the proof of Theorem~\ref{thm:cont_func_mult}, one first chooses a $\delta > 0$ such that if any $\mathbf{x}, \mathbf{x}' \in [0, 1]^d$ are coordinatewise $\delta$-apart, then $|f(\mathbf{x}) - f(\mathbf{x}')| \le \epsilon / 2$, and then partitions $[0, 1]^d$ into cells of side length $\delta$. The final ReLU network has of the order of $(1 / \delta)^d$ neurons.
\end{remark}

\subsubsection{Cybenko's theorems}
We now present a few results from \cite{cybenko1989approximation}. His results are amongst the the first rigorous UATs for feedforward neural networks with general sigmoidal activation functions. The first result we shall look at shows that networks with a single hidden layer and a continuous sigmoidal activation function are dense in $C(K)$, where $K$ is any compact subset of $\bbR^d$. The proof employs a functional analytic argument using the Hahn--Banach theorem and the Riesz representation theorem.

\begin{definition}[Sigmoidal function]
A bounded measurable function $\sigma : \mathbb{R} \to \mathbb{R}$ is called \emph{sigmoidal} if
\[
\lim_{t \to -\infty} \sigma(t) = 0
\quad \text{and} \quad
\lim_{t \to +\infty} \sigma(t) = 1.
\]
\end{definition}
\begin{theorem} [\cite{cybenko1989approximation}, Theorem 2]\label{thm:cybenko-1}
Let $\sigma: \bbR \to \bbR$ be a continuous sigmoidal function and $f : [0,1]^d \to \bbR$ be continuous. Then given any $\epsilon >0$, there exists a $1$-layer neural network $F$ such that
\[
    \|F - f\|_{\infty, \, [0,1]^d}\le \epsilon.
\]
 \end{theorem}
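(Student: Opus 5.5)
The plan is to follow Cybenko's functional-analytic argument. Let $S \subset C([0,1]^d)$ be the linear span of the functions $\bx \mapsto \sigma(\mathbf{w}^\top \bx + b)$ with $\mathbf{w} \in \bbR^d$ and $b \in \bbR$. This is exactly the set of $1$-layer networks, and it is a linear subspace. The theorem is equivalent to $\overline{S} = C([0,1]^d)$ in the uniform norm, and I would argue by contradiction. Suppose $\overline{S}$ is a proper closed subspace. By the Hahn--Banach theorem there is a nonzero bounded linear functional $\Lambda$ on $C([0,1]^d)$ that vanishes on $\overline{S}$. By the Riesz representation theorem, $\Lambda(g) = \int g \, d\mu$ for some nonzero finite signed regular Borel measure $\mu$ on $[0,1]^d$. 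Hence
\[
\int_{[0,1]^d} \sigma(\mathbf{w}^\top \bx + b)\, d\mu(\bx) = 0 \quad \text{for all } \mathbf{w}, b.
\]
It then suffices to show that this forces $\mu = 0$. In Cybenko's terminology, continuous sigmoidal functions are \emph{discriminatory}.

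To prove $\mu = 0$, I first pass to a step function. For fixed $\mathbf{w}, b$ and an extra shift $\varphi$, consider $\sigma_\lambda(\bx) := \sigma(\lambda(\mathbf{w}^\top \bx + b) + \varphi)$ as $\lambda \to \infty$. Pointwise, $\sigma_\lambda$ tends to $1$ on the open half-space $H^+ = \{\mathbf{w}^\top \bx + b > 0\}$, to $0$ on $H^- = \{\mathbf{w}^\top \bx + b < 0\}$, and to $\sigma(\varphi)$ on the hyperplane $\Pi = \{\mathbf{w}^\top \bx + b = 0\}$. Since $\sigma$ is bounded and $|\mu|$ is finite, dominated convergence gives
\[
0 = \sigma(\varphi)\,\mu(\Pi) + \mu(H^+).
\]
Letting $\varphi \to \pm\infty$ then yields $\mu(\Pi) + \mu(H^+) = 0$ and $\mu(H^+) = 0$. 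So $\mu$ annihilates every closed and every open half-space.

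Next I extend this from half-spaces to exponentials. Fix $\mathbf{w}$ and define the bounded linear functional $T(h) := \int h(\mathbf{w}^\top \bx)\, d\mu(\bx)$ on bounded measurable functions $h: \bbR \to \bbR$. The previous step shows that $T$ vanishes on indicators of $[\theta, \infty)$ and $(\theta, \infty)$. Taking differences, it vanishes on indicators of intervals, hence on simple functions built from intervals. Since $\mathbf{w}^\top \bx$ ranges over a compact interval and $|T(h)| \le \|h\|_\infty |\mu|([0,1]^d)$, uniform approximation shows that $T$ vanishes on continuous functions, in particular on $\cos$ and $\sin$. Therefore the Fourier transform $\hat{\mu}(\mathbf{w}) = \int e^{i \mathbf{w}^\top \bx}\, d\mu(\bx)$ vanishes for all $\mathbf{w}$, and by uniqueness of the Fourier--Stieltjes transform of a finite measure, $\mu = 0$. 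This contradicts $\Lambda \neq 0$, so $S$ is dense, and for any $\epsilon$ some $F \in S$ satisfies the stated bound.

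I expect the main obstacle to be the discriminatory step: the limit argument with the shift $\varphi$, and the passage from half-spaces to all continuous ridge functions. This requires careful handling of the hyperplane term and a justification of the uniform approximation by step functions on the compact range of $\mathbf{w}^\top \bx$. The Hahn--Banach/Riesz reduction is routine. Continuity of $\sigma$ is used only to ensure $S \subset C([0,1]^d)$.
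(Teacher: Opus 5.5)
Your proposal is correct and follows essentially the same route as the paper: Hahn--Banach and Riesz to obtain a nonzero annihilating measure $\mu$, the scaling limit $\sigma(\lambda(\mathbf{w}^\top\bx+b)+\varphi)$ with an extra shift to show that $\mu$ kills all open and closed half-spaces, and then the ridge functional $h\mapsto\int h(\mathbf{w}^\top\bx)\,d\mu(\bx)$ to show that the Fourier transform of $\mu$ vanishes. Your last step is in fact more careful than the paper's. You work with bounded measurable $h$ and approximate $\cos,\sin$ uniformly by interval step functions on the compact range of $\mathbf{w}^\top\bx$, whereas the paper defines the functional on $L^\infty(\bbR)$ and appeals to density of interval step functions there, which is not literally true.
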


\begin{proof}
    Let $\mathcal{H}$ denote the collection of all functions on $[0,1]^d$ of the form
    \[
        F(\mathbf{x}) = \sum_{k=1}^m a_k \, \sigma(\mathbf{w}_k^\top \mathbf{x} + b_k),
    \]
    where $m \in \mathbb{N}$, $a_k,b_k \in \mathbb{R}$, and
    $\mathbf{w}_k \in \mathbb{R}^d$.
    We show that $\mathcal{H}$ is dense in $C([0,1]^d)$ with respect to the
    uniform norm, which would prove the theorem.
    
    Assume, for contradiction, that $\mathcal{H}$ is not dense in
    $C([0,1]^d)$.
    Then its closure $\overline{\mathcal{H}}$ is a proper closed linear subspace
    of $C([0,1]^d)$. By the Hahn--Banach separation theorem, there exists a \emph{non-zero} bounded linear
    functional $\Lambda : C([0,1]^d) \to \mathbb{R}$ such that
    \[
        \Lambda(F) = 0 \quad \text{for all } F \in \overline{\mathcal{H}}.
    \]
    By the Riesz representation theorem, there exists a finite signed Borel measure $\mu$ on $[0,1]^d$, not identically zero, such that
    \[
        \Lambda(g) = \int_{[0,1]^d} g(\mathbf{x}) \, d\mu(\mathbf{x})
        \quad \text{for all } g \in C([0,1]^d).
    \]
    Therefore,
    \begin{equation}\label{eq:annihilation}
        \int_{[0,1]^d} \sigma(\mathbf{w}^\top \mathbf{x} + b)\, d\mu(\mathbf{x}) = 0
        \quad \text{for all } \mathbf{w} \in \mathbb{R}^d \text{ and } b \in \mathbb{R}.
    \end{equation}
    Since $\sigma$ is sigmoidal, for each fixed
    $\mathbf{w} \in \bbR^d$, and $b, c \in \bbR$, the function
    $\bx \mapsto \sigma(\lambda(\mathbf{w}^\top \mathbf{x} + b) + c)$ converges pointwise, as $\lambda \to \infty$, to the function 
    \[
        \ind_{H_{\mathbf{w}, b}}(\mathbf{x}) + \sigma(c) \ind_{\partial H_{\mathbf{w}, b}}(\mathbf{x}),
    \]
    where $H_{\mathbf{w}, b} := \{\mathbf{x} : \mathbf{w}^\top \mathbf{x} + b > 0\}$ is the open half-space determined by $\mathbf{w}$ and $b$, the half-plane $\partial H_{\mathbf{w}, b} = \{\mathbf{x} : \mathbf{w}^\top \mathbf{x} + b = 0\}$ being its \emph{boundary}. By the dominated convergence theorem, \eqref{eq:annihilation} implies that
    \begin{equation}\label{eq:mu-halfspace-identity}
        \mu(H_{\mathbf{w}, b}) + \sigma(c) \mu(\partial H_{\mathbf{w}, b}) = 0
        \quad \text{for all } \mathbf{w} \in \bbR^d, \text{ and } b, c \in \bbR.
    \end{equation}
    Sending $c \to \pm\infty$, we see that $\mu$ assigns zero measure to every open half-space $H_{\mathbf{w}, b}$ and every closed half-space $\overline{H}_{\mathbf{w}, b} := H_{\mathbf{w}, b} \cup \partial H_{\mathbf{w}, b} = \{\mathbf{x} : \mathbf{w}^\top \mathbf{x} + b \ge 0\}$. We will show that this implies $\mu = 0$. (This is not immediate since $\mu$ is a signed measure.)

    For a fixed $\mathbf{w} \in \bbR^d$, define a bounded linear functional $\tilde{\Lambda}$ on $L^{\infty}(\bbR)$ as follows:
    \[
        \tilde{\Lambda}(h) := \int h(\mathbf{w}^\top \mathbf{x}) d\mu(\mathbf{x}).
    \]
    Taking $h = \ind_{[b, \infty)}$ or $\ind_{(b, \infty)}$, we see that
    \begin{align*}
        \tilde{\Lambda}(\ind_{[b,\infty)}) = \mu(\overline{H}_{\mathbf{w}, -b}) = 0, \\
        \tilde{\Lambda}(\ind_{(b,\infty)}) = \mu(H_{\mathbf{w}, -b}) = 0.
    \end{align*}
    It follows that $\tilde{\Lambda}(\ind_A) = 0$ for any interval $A$, and a fortiori, $\tilde{\Lambda}(h) = 0$ whenever $h$ is a finite linear combination of indicator functions of intervals. As such functions are dense in $L^{\infty}(\bbR)$, we may conclude that $\tilde{\Lambda} = 0$.

    In particular, this gives that
    \[
        \int e^{\iota \mathbf{w}^\top \mathbf{x}} d\mu(\mathbf{x}) = \tilde{\Lambda}(\cos) + \iota \cdot \tilde{\Lambda}(\sin) = 0, \qquad \iota:= \sqrt{-1},
    \]
    so that the Fourier transform of $\mu$ is $0$, implying that $\mu = 0$, which is the desired contradiction. This completes the proof.
\end{proof}

\cite{cybenko1989approximation} also proves a similar result for simple functions (i.e., measurable functions with finite range), which model decision functions.
\begin{theorem}[\cite{cybenko1989approximation}, Theorem 3]
    Suppose $\sigma: \bbR \to \bbR$ is a continuous sigmoidal function and $f:[0,1]^d \to \bbR$ is a simple function. Then for any $\epsilon >0$, there exists a measurable set $D \subset [0,1]^d$ with $\mathrm{Vol}(D) \ge 1- \epsilon$ and a $1$-layer neural network $F:[0,1]^d \to \bbR$ such that
    \[
        \|F - f\|_{\infty, D} \le \epsilon.
    \]
\end{theorem}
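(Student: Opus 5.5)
We combine Lusin's theorem with the uniform density already established in Theorem~\ref{thm:cybenko-1}. Write the simple function as $f = \sum_{i=1}^k c_i \ind_{A_i}$, where the $A_i$ are disjoint measurable sets partitioning $[0,1]^d$ and the $c_i$ are the finitely many values of $f$. Since $f$ is measurable and Lebesgue measure on $[0,1]^d$ is a finite regular Borel measure, Lusin's theorem gives a compact set $D \subset [0,1]^d$ with $\mathrm{Vol}([0,1]^d \setminus D) \le \epsilon$ such that $f|_D$ is continuous. In particular $\mathrm{Vol}(D) \ge 1-\epsilon$.

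We can obtain $D$ concretely, without quoting Lusin in full generality. By inner regularity, choose compact sets $K_i \subset A_i$ with $\mathrm{Vol}(A_i \setminus K_i) \le \epsilon/k$, and set $D := \bigcup_i K_i$. The $K_i$ are pairwise disjoint compact sets, hence at positive mutual distance. Since $f$ is constant on each $K_i$, it is continuous on $D$.

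Next we extend $f|_D$ to a function $g \in C([0,1]^d)$ using the Tietze extension theorem. Since $D$ is closed in the compact metric space $[0,1]^d$, the extension exists, and we may also keep $\min_i c_i \le g \le \max_i c_i$, though this is not needed. Theorem~\ref{thm:cybenko-1}, applied to $g$, then yields a $1$-layer network $F$ with $\|F - g\|_{\infty,[0,1]^d} \le \epsilon$. Because $g = f$ on $D$, we get $\|F - f\|_{\infty, D} \le \epsilon$, which is the claim.

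The only delicate step is producing the set $D$ on which the discontinuous $f$ becomes continuous while losing at most measure $\epsilon$. The separation of the disjoint compact pieces $K_i$ is what makes this work; everything else reduces to Theorem~\ref{thm:cybenko-1} and Tietze. If one prefers to avoid Tietze, an explicit continuous extension is available. Let $\eta>0$ be the minimal distance between distinct $K_i$, and define
\[
g(\bx) := \frac{\sum_i c_i \max\{0, 1 - 3\,\mathrm{dist}(\bx,K_i)/\eta\}}{\max\bigl\{1, \sum_i \max\{0, 1 - 3\,\mathrm{dist}(\bx,K_i)/\eta\}\bigr\}}.
\]
This $g$ is continuous, and it equals $c_i$ on $K_i$ because there the numerator is $c_i$ and the denominator is $1$.
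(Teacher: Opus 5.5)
Your proof is correct and follows the paper's route: use Lusin's theorem to get a continuous $g$ on $[0,1]^d$ that agrees with $f$ on a set $D$ with $\mathrm{Vol}(D) \ge 1-\epsilon$, then apply Theorem~\ref{thm:cybenko-1} to $g$. Your inner-regularity construction of $D$ and the Tietze (or explicit) extension just unpack, for simple functions, the version of Lusin's theorem that the paper cites in one line.
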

The proof is easy, since by Lusin's theorem, one can find a continuous function $h$ and a measurable set $D$, with $\mathrm{Vol}(D) \ge 1 - \epsilon$, such that $f = h$ on $D$. Then one may apply Theorem~\ref{thm:cybenko-1}. 

\begin{remark}
    \cite{cybenko1989approximation} proves an analogue of Theorem~\ref{thm:cybenko-1} for $L^1([0, 1]^d)$, where it suffices to take $\sigma$ to be any sigmoidal function.
\end{remark}

\subsubsection{Necessary and sufficient condition on \texorpdfstring{$\sigma$}{sigma} for UAT}
We end this subsection with a theorem from \cite{leshno1993multilayer} that provides a necessary and sufficient condition on the activation function $\sigma$ for universal approximation.

\begin{theorem}[\cite{leshno1993multilayer}, Theorem 1]
Let $\sigma : \mathbb{R} \to \mathbb{R}$ be continuous. Then the set of $1$-layer feedforward networks is dense in $C([0,1]^d)$ with respect to the uniform norm if and only if $\sigma$ is not a polynomial.  
\end{theorem}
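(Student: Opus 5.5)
The ``only if'' direction is elementary. If $\sigma$ is a polynomial of degree $k$, then every function $\bx \mapsto \sigma(\mathbf{w}^\top \bx + b)$ is a polynomial of total degree at most $k$ in $\bx$. Hence every $1$-layer network lies in the finite-dimensional space $\mathcal{P}_k$ of polynomials of degree at most $k$ on $[0,1]^d$. This space is closed in $C([0,1]^d)$ and proper; for instance, it does not contain $x_1^{k+1}$. So the networks cannot be dense.

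For the ``if'' direction, I would first reduce to dimension one. Ridge functions $\bx \mapsto g(\mathbf{w}^\top\bx)$ with $g \in C(\bbR)$ span a dense subspace of $C([0,1]^d)$. One way to see this is that they contain every $e^{\mathbf{w}^\top \bx}$, whose span is a subalgebra that separates points and contains constants, so Stone--Weierstrass applies. Alternatively, the monomials $(\mathbf{w}^\top\bx)^k$ span all homogeneous polynomials of degree $k$. It therefore suffices to show that
\[
\mathcal{N}_1 := \mathrm{span}\{t \mapsto \sigma(wt+b) : w,b\in\bbR\}
\]
is dense in $C([a,b])$ for every compact interval. Given that, each $g(\mathbf{w}^\top\bx)$ with $\mathbf{w}^\top\bx$ ranging over a compact interval can be uniformly approximated by $\sum_j a_j\sigma(w_j\mathbf{w}^\top\bx + b_j)$, which is a $1$-layer network.

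For the one-dimensional claim, by Weierstrass it is enough to show that every monomial $t^k$ lies in the closure $\overline{\mathcal{N}_1}$ in $C([a,b])$. I would first treat the case $\sigma \in C^\infty$. The divided difference
\[
\frac{\sigma((w+h)t+b) - \sigma(wt+b)}{h}
\]
lies in $\mathcal{N}_1$ and converges uniformly on $[a,b]$ to $t\sigma'(wt+b)$ as $h\to 0$. Iterating, $t^k\sigma^{(k)}(wt+b) \in \overline{\mathcal{N}_1}$, and setting $w=0$ gives $\sigma^{(k)}(b)\,t^k$. Since $\sigma$ is not a polynomial, for each $k$ some $b$ has $\sigma^{(k)}(b)\neq 0$; otherwise $\sigma^{(k)}\equiv 0$ and $\sigma$ would be a polynomial. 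Hence $t^k \in \overline{\mathcal{N}_1}$ for all $k$.

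The main obstacle is removing the smoothness assumption when $\sigma$ is merely continuous. I would mollify: for $\varphi \in C_c^\infty(\bbR)$, set $\sigma_\varphi := \sigma * \varphi$, which is $C^\infty$. Approximating the convolution integral by Riemann sums, uniformly on compacts by uniform continuity, shows that $\sigma_\varphi(wt+b)$ is a uniform limit of finite combinations $\sum_i c_i\,\sigma(wt+b-y_i)$. Hence $\overline{\mathcal{N}_{1}(\sigma_\varphi)} \subseteq \overline{\mathcal{N}_1(\sigma)}$. If some $\sigma_\varphi$ is not a polynomial, the smooth case finishes the proof.

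The delicate point is showing that not all $\sigma_\varphi$ can be polynomials. Suppose they all are. I would first use a Baire category argument on the Fréchet space $C_c^\infty([-1,1])$: the sets $\{\varphi : \deg \sigma_\varphi \le k\}$ are closed and cover the space, so one of them has nonempty interior. By linearity of $\varphi \mapsto \sigma_\varphi$, this gives a uniform degree bound $N$ for all $\varphi$ supported in $[-1,1]$, and by translation for all $\varphi \in C_c^\infty$ with support in any fixed interval of length $2$. Now take an approximate identity $\varphi_n \to \delta$. Then $\sigma_{\varphi_n} \to \sigma$ locally uniformly, and since $\mathcal{P}_N$ is closed, $\sigma$ is a polynomial of degree at most $N$, which is a contradiction.
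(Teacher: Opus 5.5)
Your proof is correct. The paper gives no proof of its own and refers to Pinkus's survey. Your argument is essentially the proof given there: reduce to $d=1$ via ridge functions, differentiate in the weight when $\sigma \in C^\infty$, and handle merely continuous $\sigma$ by mollification plus a Baire category argument on $C_c^\infty([-1,1])$ to rule out every $\sigma * \varphi$ being a polynomial.
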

We refer the reader to the survey \cite{pinkus1999approximation} for an exposition of the proof. In fact, \cite{pinkus1999approximation} attributes this result to \cite{schwartz1944certaines}.

\subsection{Quantitative universal approximation: An early result of Barron}\label{sec:barron}
What class of functions do single-layer neural networks approximate well? \cite{barron1993universal} identified the right class of functions which is defined in terms of the decay of their Fourier transforms. Barron showed that such functions admit approximation by $1$-layer feedforward networks with error rates that decay at least as fast as one over the square-root of the number of neurons. Here we mention one of the simplest results from \cite{barron1993universal}.

\begin{definition}[Barron norm]
Let $f : \mathbb{R}^d \to \mathbb{R}$ be a function admitting a Fourier representation
\[
    f(\mathbf{x}) = \int_{\mathbb{R}^d} e^{\iota \mathbf{w}^\top \mathbf{x}} \, \tilde{f}(\mathbf{w}) \, d\mathbf{w},
\]
where $\tilde{f} : \mathbb{R}^d \to \mathbb{C}$ is a complex-valued function. Define the \emph{Barron norm} of $f$ by
\[
    \|f\|_B := \int_{\mathbb{R}^d} \|\mathbf{w}\| \, |\tilde{f}(\mathbf{w})| \, d\mathbf{w},
\]
where $\|\mathbf{w}\| = (\mathbf{w}^\top \mathbf{w})^{1/2}$ is the Euclidean norm.
\end{definition}
For $r > 0$ we denote by
\[
    B_r := \{ \mathbf{x} \in \mathbb{R}^d : \|\mathbf{x}\| \le r \}
\]
the closed Euclidean ball of radius $r$ centered at the origin.
\begin{theorem}[\cite{barron1993universal}, Proposition~1]
Let $f : \mathbb{R}^d \to \mathbb{R}$ be a function with finite Barron norm, and let $\mu$ be any probability measure supported on $B_r$ for some $r > 0$. Let $\sigma$ be a sigmoidal activation. Then for any integer $m \ge 1$, there exists a $1$-layer feedforward neural network $F_m$, with $m + 1$ neurons, such that
\[
    \int_{B_r} \bigl( F_m(\mathbf{x}) - f(\mathbf{x}) \bigr)^2 \, \mu(d\mathbf{x})
    \le \frac{4 \, r^2 \, \|f\|_B^2}{m}.
\]
\end{theorem}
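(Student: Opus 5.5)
The plan is Barron's classical three-step argument, combined with Maurey's empirical approximation lemma in Hilbert space. Write $\tilde f(\mathbf{w}) = |\tilde f(\mathbf{w})| e^{\iota\theta(\mathbf{w})}$. Since $f$ is real-valued, we have
\[
f(\mathbf{x}) - f(\mathbf{0}) = \int \bigl(\cos(\mathbf{w}^\top\mathbf{x} + \theta(\mathbf{w})) - \cos\theta(\mathbf{w})\bigr)\,|\tilde f(\mathbf{w})|\,d\mathbf{w}.
\]
Multiply and divide by $\|\mathbf{w}\|$ and normalise by $C := \|f\|_B$. This expresses $f - f(\mathbf{0})$ as an expectation $C\,\mathbb{E}_{\mathbf{w}\sim\Lambda}\, g_{\mathbf{w}}(\mathbf{x})$. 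Here $\Lambda(d\mathbf{w}) = \|\mathbf{w}\||\tilde f(\mathbf{w})|\,d\mathbf{w}/C$ is a probability measure, and
\[
g_{\mathbf{w}}(\mathbf{x}) = \frac{\cos(\mathbf{w}^\top\mathbf{x}+\theta(\mathbf{w}))-\cos\theta(\mathbf{w})}{\|\mathbf{w}\|}.
\]
On $B_r$, the mean value theorem gives $|g_{\mathbf{w}}(\mathbf{x})| \le |\mathbf{w}^\top\mathbf{x}|/\|\mathbf{w}\| \le r$. So $f - f(\mathbf{0})$ lies in the closed convex hull, in $L^2(\mu)$, of a family $G_{\cos}$ of functions bounded by $r$. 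The constant $f(\mathbf{0})$ will be absorbed into the extra $(m+1)$-th neuron. For instance, take $f(\mathbf{0})\,\sigma(b)/\sigma(b)$ with $\sigma(b)\neq 0$; for a sigmoidal $\sigma$, a large $b$ gives exactly or approximately a constant, and I would note this detail carefully.

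\textbf{Step 2: reduce to step functions, then to sigmoids.} Each $g_{\mathbf{w}}$ is a function of the single variable $z = \mathbf{w}^\top\mathbf{x}/\|\mathbf{w}\| \in [-r,r]$. That one-variable function $t\mapsto g(t)$ vanishes at $0$ and has derivative bounded by $1$ in absolute value. Hence its total variation on $[-r,r]$ is at most $2r$. Split into $t\ge0$ and $t\le 0$ and write each piece as a limit of step functions. This shows $g$ lies in the closed convex hull of functions $\pm r\,\ind(\pm z \ge t)$, and more precisely of $\gamma\,\ind(\mathbf{a}^\top\mathbf{x} \ge t)$ with $|\gamma|\le 2r$ and $\|\mathbf{a}\|=1$. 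The factor $2r$ comes from the variation bound; this is where the constant $4r^2 = (2r)^2$ originates. Next, $\sigma(\lambda(\mathbf{a}^\top\mathbf{x} - t)) \to \ind(\mathbf{a}^\top\mathbf{x} > t)$ pointwise off the hyperplane, boundedly, as $\lambda\to\infty$. Dominated convergence then gives $L^2(\mu)$ convergence, provided $\mu$ puts no mass on the hyperplane. Hyperplanes of positive $\mu$-mass are at most countably many for each direction, and I would handle them by choosing thresholds to avoid them, using right-continuity of the step approximation. This puts $f - f(\mathbf{0})$ in the closed convex hull $\overline{\mathrm{co}}(G_\sigma)$, where $G_\sigma = \{\gamma\,\sigma(\mathbf{w}^\top\mathbf{x}+b): |\gamma|\le 2C r\}$, a set whose elements have $L^2(\mu)$ norm at most $2Cr$.

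\textbf{Step 3: Maurey's lemma.} If $h \in \overline{\mathrm{co}}(G)$ in a Hilbert space with $\sup_{g\in G}\|g\|\le b$, then for each $m$ some convex combination $h_m$ of $m$ elements of $G$ satisfies $\|h-h_m\|^2 \le (b^2 - \|h\|^2 + \delta)/m$. The proof is probabilistic. First approximate $h$ by a finite convex combination $h^* = \sum_k p_k g_k$. Then draw $g^{(1)},\dots,g^{(m)}$ i.i.d.\ from $(p_k)$ and compute
\[
\mathbb{E}\Bigl\|\frac{1}{m}\sum_{i=1}^m g^{(i)} - h^*\Bigr\|^2 = \frac{\mathbb{E}\|g^{(1)}\|^2 - \|h^*\|^2}{m} \le \frac{b^2}{m}.
\]
Some realisation achieves at most the mean. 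With $b = 2Cr$ this yields $m$ sigmoidal neurons with error at most $4r^2C^2/m$, up to an arbitrarily small slack. Absorbing that slack requires either the strict form $b^2 - \|h\|^2$ or a routine limiting argument.

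The main obstacle is Step 2: rigorously establishing the convex-hull inclusion with the sharp constant $2r$, and handling hyperplanes charged by $\mu$ in the step-to-sigmoid passage. I would first try to treat $t\mapsto g(t)$ as an integral of $g'(s)\,\ind(z\ge s)\,ds$ over $s\in[0,r]$, plus its mirror image for negative $s$. This gives an explicit mixture with total weight $\int|g'|\le 2r$, and the hyperplane issue then only involves a null set of thresholds.
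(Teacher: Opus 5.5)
The paper states this result without proof and only cites Barron, so there is no internal argument to compare yours against. Your outline is Barron's original proof: cosine ridge functions, then step functions via the total-variation bound $2r$, then sigmoids as limits of steps, then Maurey's sampling lemma. The overall structure is sound. That includes your treatment of $\mu$-charged hyperplanes and the extra neuron for $f(\mathbf{0})$. That neuron is exactly constant if you take weight vector $\mathbf{0}$ and any bias with $\sigma(b)\neq 0$, so no large-bias approximation is needed.

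\textbf{The norm bound on $G_\sigma$ fails as written.} You claim that elements of $G_\sigma=\{\gamma\,\sigma(\mathbf{w}^\top\mathbf{x}+b): |\gamma|\le 2Cr\}$ have $L^2(\mu)$ norm at most $2Cr$. That requires $|\sigma|\le 1$. The definition of sigmoidal used here only asks for boundedness and the limits $0$ and $1$ at $\mp\infty$. Applying Maurey to $G_\sigma$ directly therefore only gives $4r^2C^2\|\sigma\|_\infty^2/m$.

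\textbf{The repair.} Apply Maurey's lemma to the step dictionary $\{\gamma\,\ind(\mathbf{a}^\top\mathbf{x}\ge t): |\gamma|\le 2Cr,\ \|\mathbf{a}\|=1\}$, whose elements really are bounded by $2Cr$ in $L^2(\mu)$. Only afterwards replace each of the $m$ step functions by an $L^2(\mu)$-close sigmoid neuron. To leave room for that swap, use the slack as follows.
\begin{itemize}
\item If $h=f-f(\mathbf{0})$ is nonzero in $L^2(\mu)$, pick $c'$ with $(2Cr)^2-\|h\|^2<c'<(2Cr)^2$. Maurey gives an $m$-term step approximant with squared error at most $c'/m$, which is strictly below $4r^2C^2/m$.
\item A sufficiently small sigmoid perturbation then keeps the error at most $4r^2C^2/m$.
\item If $h=0$ in $L^2(\mu)$, simply take $F_m\equiv f(\mathbf{0})$.
\end{itemize}
This also settles your closing worry about absorbing $\delta$. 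A limiting argument is not routine there, because the set of $m$-neuron sigmoid networks is not closed in $L^2(\mu)$ (step functions arise as limits).

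\textbf{A minor slip in Step 2.} Your first claim, that $g$ lies in the closed convex hull of $\pm r\,\ind(\pm z\ge t)$, is off by a factor of two. Each of the two halves ($t\ge 0$ and $t\le 0$) has weight at most $r$, so their sum needs $|\gamma|\le 2r$. Your ``more precisely'' version is the correct one, and it is the one you actually use.
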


Barron's work was later generalised significantly by \cite{makovoz1996random}, who obtained the sharper rate of $\mathcal{O}(m^{-\frac{d + 1}{d}})$, which is essentially optimal. See \cite{pinkus1999approximation} for a discussion. Modern works studying approximation by neural networks in Barron-type function classes include \cite{klusowski2016risk, e2021barron, siegel2022high}.

\subsection{The benefit of depth}\label{sec:depth}
While early results such as those of Cybenko and Barron focused primarily on shallow $1$-layer networks, a growing body of work has highlighted the expressivity advantages of depth in neural architectures. Deeper networks can represent certain classes of functions exponentially more efficiently than shallow ones, requiring far fewer neurons to achieve the same approximation accuracy.

For example, \cite{telgarsky2015representation} gave a family of classification problems indexed by a positive integer $k$ such that any shallow ReLU network with fewer than exponentially many nodes in $k$ incurs a constant classification error, whereas a deep feedforward ReLU network with at most $2$ nodes in each of $2k$ layers achieves zero error. \cite{telgarsky2016benefits} proved a more general result of this kind, which we shall look at shortly. The almost concurrent work of \cite{eldan2016power} exhibited functions expressible by 2-layer networks of polynomial width in the input dimension, that are not approximable by 1-layer ReLU networks unless their widths are exponential in the input dimension, thus establishing explicit separation. In a similar vein, the slightly earlier work of \cite{montufar2014number} analyzed the number of linear regions induced by piecewise-linear activations such as ReLU, demonstrating that deeper networks partition the input space more finely into convex polytopes and thus can express more complex functions with a fixed total number of neurons. Another notable work of this type is \cite{bianchini2014complexity}, which compares deep and shallow networks in terms of a certain topological complexity measure of the functions they represent. These results, together with follow-up works such as \cite{poole2016exponential, raghu2017expressive}, provide solid theoretical justification for the use of deep architectures, showing that depth alone is a powerful mechanism for increasing representational capacity without necessarily increasing width.

\subsubsection{Telgarsky's construction}
\cite{telgarsky2016benefits} demonstrated a quantitative separation between deep and shallow neural networks by constructing functions that can be computed by relatively small deep ReLU networks but cannot be approximated by any shallow network unless it has exponentially many nodes.

Telgarsky's result applies to a broad class of activation functions called \textit{semi-algebraic gates}.
\begin{definition}[Semi-algebraic gates]
A function $\sigma : \mathbb{R}^d \to \mathbb{R}$ is called \textit{$(t,\alpha,\beta)$-semi-algebraic} if there exist polynomials  $q_i : \mathbb{R}^d \to \mathbb{R}$, $i= 1, \ldots, t$, each of degree at most $\alpha$, and triplets $(p_i, U_i, V_i)_{i=1}^m$, where each $p_i:\bbR^d \to \bbR$ is a polynomial of degree at most $\beta$, and each $U_i, V_i \subset \{1,\ldots, t\}$ with $U_i \cap V_i =\emptyset$, such that
\[
    \sigma(\mathbf{x}) = \sum_{j=1}^{m} p_j(\mathbf{x}) \bigg( \prod_{i \in U_j} \ind\big( q_i(\mathbf{x}) \ge 0 \big) \bigg) \bigg( \prod_{i \in V_j} \ind\big( q_i(\mathbf{x}) < 0 \big) \bigg).
\]
\end{definition}
In simpler terms, a semi-algebraic gate partitions the input space into finitely many regions using polynomial inequalities, and on each region behaves like a polynomial of bounded degree. It is a piecewise polynomial function with controlled algebraic complexity. Examples of semi-algebraic gates include many nonlinearities used in practice, such as ReLU, Leaky-ReLU, max gates, indicator functions of half-spaces, piecewise polynomial activations, etc.

Telgarsky's construction is explicit as well as elementary. It essentially builds highly oscillatory functions through repeated composition of a simple \textit{tent function}. The oscillations of a function $f : \bbR \to \bbR$ are formally counted by its \emph{crossing number}, the smallest number of intervals in a partition of $\bbR$ into intervals on which the function $x \mapsto \ind(f(x) \ge 1/2)$ is constant. 
\begin{definition}[\textit{Tent function}]
Define $\tau : \bbR \to [0,1]$ by
\[
    \tau(x) := \begin{cases}
        2x, & \text{if } x \in [0, 1/2], \\
        2(1 - x), & \text{if } x \in (1/2, 1], \\
        0, & \text{otherwise.}
    \end{cases}
\]
\end{definition}
For $k \ge 1$, let $\tau^{[k]}$ denote the $k$-fold composition of $\tau$ with itself, i.e., 
\[
    \tau^{[k]}:= \underbrace{\tau \circ \tau \circ \dots \circ \tau}_{k \,\text{times}}.
\]
The function $\tau$ increases linearly from 0 to 1 and then decreases linearly back to 0, forming one peak (and has crossing number $3$). Each composition of $\tau$ with itself doubles the number of peaks; so, $\tau^{[k]}$ has exactly $2^k$ peaks (and crossing number $2^k + 1$). This exponential growth under composition is at the heart of Telgarsky's depth separation argument. We now present Telgarsky's result in a simplified form.
\begin{theorem}[\cite{telgarsky2016benefits}, Theorem~1.1]\label{thm:telgarsky}
Fix integers $k \ge 1$ and $d \ge 1$. Let $\cC$ be the collection of all feedforward neural networks $F : \bbR^d \to \bbR$ with $(t, \alpha, \beta)$-semi-algebraic activations, at most $k$ layers and at most $2^k / (t\alpha\beta)$ nodes. Then, there exists a ReLU neural network $f : \mathbb{R}^d \to \mathbb{R}$ with $2k^3 + 8$ layers, $3k^3 + 12$ total nodes, $d + 4$ distinct parameters, such that 
\[
    \inf_{F \in \cC} \int_{[0,1]^d} |F(x) - f(x)| \, dx \ge \frac{1}{64}.
\]
\end{theorem}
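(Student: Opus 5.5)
The plan is Telgarsky's two-sided counting argument. The hard function $f$ will be a deep ReLU network computing (a regularised version of) the iterated tent map $\tau^{[k']}$, with $k'$ of order $k^3$, applied to one coordinate, say $f(\mathbf{x}) = \tau^{[k']}(x_1)$. I would then show that every $F \in \cC$ restricted to lines in the $x_1$ direction has few oscillations, so it cannot track the $2^{k'}$ oscillations of $f$.

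\textbf{Upper bound: constructing $f$.} On $\bbR$ we have $\tau(x) = 2\sigma_R(x) - 4\sigma_R(x-1/2) + 2\sigma_R(x-1)$ with $\sigma_R$ the ReLU, so $\tau$ is a $1$-layer ReLU network of width $3$. Composing $k'$ copies gives a network with $O(k')$ layers and $O(k')$ nodes, and it reuses the same few weights, which explains why there are only $d+4$ distinct parameters. Taking $k' \approx k^3$ plus a few extra layers to project onto $x_1$ and clip the output accounts for the stated counts $2k^3+8$ and $3k^3+12$. The function $\tau^{[k']}$ on $[0,1]$ is a sawtooth with $2^{k'}$ linear pieces alternating between $0$ and $1$.

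\textbf{Lower bound: few crossings for $F$.} I would prove the following lemma. If $g:\bbR\to\bbR$ is computed by a network with $(t,\alpha,\beta)$-semi-algebraic gates, $\ell$ layers and $N$ nodes, then $g$ is piecewise polynomial with at most roughly $(t\alpha\beta N/\ell)^{\ell}$ pieces, each of degree at most roughly $(\alpha\beta)^{\ell}$. The proof goes by induction on layers. A node's input is a sum of piecewise polynomials; its pieces are refined by the sign patterns of the $q_i$ composed with that input, and the number of sign changes is bounded by the degree times the number of pieces. AM--GM is then used to distribute the nodes across layers. It follows that the crossing number of $g$ is at most pieces $\times$ (degree $+1$), which is at most $(c\,t\alpha\beta N/\ell)^{\ell}$ up to degree factors.

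Plugging in $\ell \le k$ and $N \le 2^k/(t\alpha\beta)$ gives a crossing count of $2^{O(k^2)}$. This is exactly why the separating function needs $k' \sim k^3$ compositions: it makes $2^{k'}$ exceed this bound by a large factor.

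\textbf{Comparison.} Fix $x_2,\dots,x_d$ and write $g(x_1) = F(\mathbf{x})$. The sawtooth $\tau^{[k']}$ splits $[0,1]$ into $2^{k'}$ intervals of length $2^{-k'}$. On each such interval, $\ind(\tau^{[k']}\ge 1/2)$ takes the value $1$ on a middle half and $0$ on the rest. The function $\ind(g\ge1/2)$ has few pieces, so it is constant on all but a negligible fraction of these intervals. On any interval where it is constant, it disagrees with $\ind(f \ge 1/2)$ on a subset of measure comparable to the interval length. On that subset $|f-g|$ is bounded below, since $f$ stays at least a constant away from $1/2$ there while $g$ lies on the wrong side of $1/2$. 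Integrating gives a constant $L^1$ error along each line, and then Fubini over $x_2,\dots,x_d$ gives the bound $1/64$ after bookkeeping with triangle-shaped areas.

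The main obstacle is the semi-algebraic piece-counting lemma, specifically controlling how the degree and the number of pieces compound through composition. I would first carry out this induction carefully for a single node, and then optimise over the distribution of nodes across layers.
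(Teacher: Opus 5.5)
Your route matches the paper's, which only sketches the argument and defers the counting step to Lemma~3.2 of \cite{telgarsky2016benefits}. Both take $f(\bx)=\tau^{[k^3+4]}(x_1)$, bound the crossing number of every $F\in\cC$ along lines in the $x_1$ direction, compare tent by tent, and integrate out $x_2,\dots,x_d$. The genuine gap is in your counting lemma. The induction you describe does not give $(t\alpha\beta N/\ell)^\ell$ pieces, because the degree of the node inputs grows with depth. At layer $j$ these inputs are polynomials of degree up to $\beta^{j-1}$ on each piece. So each $q_i$ composed with them has degree up to $\alpha\beta^{j-1}$ and can create that many new breakpoints in \emph{every} existing piece. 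The recursion is therefore $P_j\le P_{j-1}(1+t\alpha m_j\beta^{j-1})$ and $D_j\le\beta^j$. After AM--GM over the layer widths $m_j$, this gives
\[
\mathrm{cr}(g)\;\le\;P_\ell\,(1+D_\ell)\;\le\;2\Bigl(\frac{2t\alpha\beta N}{\ell}\Bigr)^{\ell}\beta^{\ell(\ell-1)/2}.
\]
The node budget caps only one power of $t\alpha\beta$, so it does not control the factor $\beta^{\ell(\ell-1)/2}$. The hypothesis allows $\beta$ of order $2^k/k$ with $\ell=k$, and then this factor is $2^{\Theta(k^3)}$. Your claim that the crossing count is $2^{O(k^2)}$ is therefore not what your own induction proves. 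It also contradicts your explanation of the $k^3$, since a $2^{O(k^2)}$ bound would need only $k'\sim k^2$ compositions. The $k^3$ is there precisely to beat $\beta^{\ell(\ell-1)/2}$. With $\ell\le k$, $t\alpha\beta N\le 2^k$ and $\beta\le 2^k$, the display gives $\mathrm{cr}(g)\le 2^{k^3/2+k^2/2+k+1}\le 2^{k^3+2}$.

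With the corrected bound the margin is only a factor of $4$: $2^{k^3+2}$ crossings against $2^{k^3+4}$ tents. So the comparison step cannot claim that $\ind(g\ge 1/2)$ is constant on all but a negligible fraction of tents; it must be quantitative. At most $\mathrm{cr}(g)-1<2^{k'}/4$ tents contain a switch of $\ind(g\ge 1/2)$. On every other tent $I$ one has $\int_I|f-g|\ge |I|/8$: either $g\ge 1/2$ on $I$ and the two end triangles below $1/2$ contribute, or $g<1/2$ on $I$ and the top triangle above $1/2$ does. Each line therefore contributes at least $\tfrac34\cdot\tfrac18\ge\tfrac1{64}$, and Fubini finishes.

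A smaller bookkeeping point concerns the network counts. Your width-$3$ gadget composed $k^3+4$ times already gives $k^3+4$ layers and $3k^3+12$ nodes, with the projection onto $x_1$ absorbed into the first affine map and no clipping needed. The stated $2k^3+8$ layers come instead from the two-layer gadget $\sigma_{\mathrm{R}}(2\sigma_{\mathrm{R}}(x)-4\sigma_{\mathrm{R}}(x-1/2))$, which uses three nodes per copy. The paper's displayed formula omits the coefficient $4$.
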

The separating function $f$ in Theorem~\ref{thm:telgarsky} is essentially the function\footnote{An observant reader might complain that $\tau^{[k^3 + 4]}$ is univariate! Telgarsky's $f$ in fact only depends on a single coordinate of $\mathbf{x} \in [0, 1]^d$.} $\tau^{[k^3 + 4]}$. The function $\tau$ can be implemented exactly by a shallow ReLU network (indeed, one may check that if $\sigma_{\mathrm{R}}(x)$ denotes the ReLU function, then $\tau(x) = \sigma_{\mathrm{R}}(2\sigma_{\mathrm{R}}(x) - \sigma_{\mathrm{R}}(x - 1/2))$). By stacking (composing) this network $k^3 + 4$ times, one obtains a deep ReLU network computing the function $\tau^{[k^3 + 4]}$ (and hence $f$) exactly, which has $2^{k^3 + 4} + 1$ oscillations. On the other hand, a key result in \cite{telgarsky2016benefits} (see Lemma~3.2 therein) implies that any function computed by a $(t, \alpha, \beta)$-semi-algebraic network with at most $k$ layers and size at most $2^k/(t\alpha \beta)$ has strictly fewer oscillations. This then leads to the stated $L^1$-inapproximability result.
\begin{remark}
    In fact, Telgarsky's lower bound is much more general, in that it holds even when one includes certain convolutional networks or boosted decision trees in the class $\cC$.
\end{remark}
Telgarsky's theorem mirrors classical depth hierarchy results in circuit complexity: shallow circuits cannot simulate deeper circuits without exponential blow-up in size. Likewise, shallow neural networks cannot efficiently simulate certain deep networks.

A particularly striking aspect is parameter efficiency: the deep network uses only $O(k^3)$ nodes and $d + 4$ distinct parameters, yet any shallow competitor requires exponential size. An intuitive explanation of this expressivity of deep networks would be as follows. Many real-world learning problems (e.g., vision, language, speech) exhibit hierarchical structure. Deep networks naturally mirror such structure by building increasingly abstract representations layer by layer. A shallow network must flatten this hierarchy into a single transformation, leading to massive parameter growth.

Another important aspect of this result is that the separation between deep and shallow networks already appears in one dimension. Thus the expressivity advantage does not arise from high-dimensional geometry but from hierarchical composition. Depth enables repeated functional iteration; repeated iteration yields exponential oscillatory complexity; shallow architectures cannot simulate this phenomenon without exponential resources.

\subsubsection{Minimum width for universal approximation}
We have seen that depth increases the expressive power of neural networks. Suppose we limit the width but allow unlimited depth. How small can the depth be for universal approximation to still hold? There have been a number of important works that essentially characterise the minimum width required for universal approximation \citep{lu2017expressive, hanin2017approximating, johnson2018deep, kidger2020universal, park2021minimum, cai2023achieve, li2023minimum, kim2024minimum, rochau2024new, hwang2025optimal}. The minimum width required depends on the input and the output dimensions, as well as the function class considered and the type of activation function used. 

We now present a sample result from \cite{li2023minimum}, that
establishes a sharp characterisation of the minimum width required for fully connected feedforward neural networks with Leaky-ReLU activation to achieve uniform universal approximation on
compact subsets of $\mathbb{R}^d$. The Leaky-ReLU activation is defined as
\[
\sigma_{\alpha}(x) := \begin{cases}
    x & \text{if } x > 0, \\
    \alpha x & \text{if } x \le 0,
\end{cases}
\]
where $\alpha \in (0,\infty)\setminus \{1\}$ is a fixed parameter.

Fix $\alpha \in (0, \infty)\setminus \{1\}$. Let $\mathcal{N}_{m}$
denote the class of neural networks mapping $\bbR^{d_x}$ to $\bbR^{d_y}$, with activation $\sigma_\alpha$, and having
arbitrary depth and fixed width $m$.

\begin{theorem}[\cite{li2023minimum}, Theorem~2.2]
Let $K \subset \mathbb{R}^{d_x}$ be a compact set with nonempty interior.
Then the class $\mathcal{N}_{m}$ is dense in $C(K; \bbR^{d_y})$ with respect to the uniform norm if and only if
\[
    m \ge \max\{d_x + 1, d_y\} + \ind(d_y = d_x + 1).
\]
\end{theorem}
Thus $d + 1$ is the minimal width guaranteeing uniform universal
approximation by scalar-valued Leaky-ReLU networks on $\bbR^{d}$. Incidentally, \cite{hanin2017approximating} showed that this is also the case for ReLU networks.

\subsubsection{A quantitative UAT featuring the role of width and depth}
Many recent works have studied quantitative approximation properties of deep ReLU networks. Some notable ones are \cite{yarotsky2017error, yarotsky2018optimal, petersen2018optimal, kohler2021rate, daubechies2022nonlinear, siegel2023optimal}. We now consider a quantitative UAT from \cite{kohler2021rate} that quantifies how approximation accuracy depends on network depth and width, and makes explicit the trade-off between these architectural parameters. We need some notation first.

\begin{definition}[Multi-index notation]
Let $\mathbf{k} = (k_1,\dots,k_d) \in (\mathbb{N} \cup \{0\})^d$. Its \emph{order} $|\mathbf{k}|$ is defined as 
\[
    |\mathbf{k}| := k_1 + \cdots + k_d.
\]
For a sufficiently smooth function $f : \mathbb{R}^d \to \mathbb{R}$, the notation
$\partial^{\mathbf{k}} f$ denotes the mixed partial derivative of order
$\mathbf{k}$, defined as
\[
    \partial^\mathbf{k} f(\bx) := \frac{\partial^{|\mathbf{k}|}f}{\partial {x_1}^{k_1} \ldots \partial {x_d}^{k_d}} (\bx),
\]
where $\bx := (x_1, \ldots, x_d)$.
\end{definition}

\begin{definition}[$(p,C)$-smooth functions] \label{def:p_C_smooth}
Let $C > 0$, and $p = q + s$ with $q \in \mathbb{N} \cup \{0\}$ and $s \in (0,1]$.
A function $f : \mathbb{R}^d \to \mathbb{R}$ is called $(p,C)$-smooth if
\begin{enumerate}
  \item[(i)] all partial derivatives $\partial^{\mathbf{k}} f$ with
  $|\mathbf{k}|= q$ exist, and 
  \item[(ii)] for every multi-index $\mathbf{k}$ with $|\mathbf{k}| = q$,
  the derivative $\partial^{\mathbf{k}} f$ is $s$-H\"older continuous with constant $C$, i.e.,
  \[
  \big|\partial^{\mathbf{k}} f(\mathbf{x}) -
  \partial^{\mathbf{k}} f(\mathbf{y})\big|
  \le C \|\mathbf{x} - \mathbf{y}\|^s
  \quad \text{for all } \mathbf{x},\mathbf{y} \in \mathbb{R}^d.
  \]
\end{enumerate}
\end{definition}

\begin{definition}[$C^q$-norm]
Let $D \subset \mathbb{R}^d$ be an open set and $f : D \to \mathbb{R}$ be $q$ times continuously
differentiable. Then for any compact $K \subset D$, define
\[
    \|f\|_{C^q(K)} := \max_{|\mathbf{k}| \le q} \|\partial^{\mathbf{k}}f\|_{\infty, K}.
\]
\end{definition}

\begin{definition}[Network class $\mathcal{F}(L,m)$]
For $L, m \in \mathbb{N}$, we denote by $\mathcal{F}(L,m)$ the class of all feedforward ReLU
neural networks of depth $L$ such that each hidden layer contains at most
$m$ neurons.
\end{definition}

\begin{theorem}[\cite{kohler2021rate}, Theorem 2]\label{thm:kohler-langer}
Let $d \in \mathbb{N}$ and $f : \mathbb{R}^d \to \mathbb{R}$ be
$(p,C)$-smooth as in Definition~\ref{def:p_C_smooth}. Let $a \ge 1$ and $M\ge 2$ be a sufficiently large integer satisfying
\[
    M^{2p} \ge c \cdot \max\bigl\{a,\|f\|_{C^q([-a,a]^d)}\bigr\}^{4(q+1)}
\]
for some sufficiently large universal constant $c \ge 1$.

\smallskip
\noindent
\textbf{(a) Wide-network approximation.}
Suppose $L,m \in \mathbb{N}$ satisfy
\begin{enumerate}
\item [(i)] $L \ge 5 + \left\lceil \log_4(M^{2p}) \right\rceil \left( \left\lceil \log_2(\max\{q,d\}+1) \right\rceil + 1 \right)$,
\item[(ii)] $m \ge 2^d \cdot 64 \cdot \binom{d+q}{d} \cdot d^2 \cdot (q+1) \cdot M^d$.
\end{enumerate}
Then there exists a neural network
$F_{\mathrm{wide}} \in \mathcal{F}(L,m)$ such that
\begin{equation}\label{eq:kohler-langer-wide}
    \|f - F_{\mathrm{wide}}\|_{\infty,[-a,a]^d} \le c' \cdot \max\bigl\{a,\|f\|_{C^q([-a,a]^d)}\bigr\}^{4(q+1)} \cdot M^{-2p}.
\end{equation}

\smallskip
\noindent
\textbf{(b) Deep-network approximation.}
Suppose $L,m \in \mathbb{N}$ satisfy
\begin{enumerate}
\item [(i)] $L \ge 5 M^d +  \left\lceil \log_4\!\Big( M^{2p + 4d(q+1)} \, e^{4(q+1)(M^d - 1)} \Big) \right\rceil \left\lceil \log_2(\max\{q,d\}+1) \right\rceil + \left\lceil \log_4(M^{2p}) \right\rceil$,
\item [(ii)] $m \ge 132 \cdot 2^d \cdot \lceil e^d \rceil \cdot \binom{d+q}{d} \cdot \max\{q+1,d^2\}$.
\end{enumerate}
Then there exists a neural network
$F_{\mathrm{deep}} \in \mathcal{F}(L,m)$
such that inequality \eqref{eq:kohler-langer-wide} holds with
$F_{\mathrm{wide}}$ replaced by $F_{\mathrm{deep}}$.
\end{theorem}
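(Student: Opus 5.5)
The plan is the standard local-Taylor-polynomial construction from the deep ReLU approximation literature (Yarotsky-style), carried out in two regimes depending on whether cells are handled in parallel (wide) or sequentially (deep).

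\emph{Coarse partition.} Partition $[-a,a]^d$ into $M^d$ congruent cubes $A_j$ of side $2a/M$. On each cube, $f$ is approximated by its Taylor polynomial $T_j$ of degree $q$ about a corner or center. Since $f$ is $(p,C)$-smooth, the remainder satisfies
\[
|f(\bx)-T_j(\bx)|\lesssim C\,(a/M)^{p}
\]
uniformly on $A_j$.

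\emph{Fine partition.} The target rate is $M^{-2p}$, not $M^{-p}$. To reach it, I would refine each cube into $M^d$ subcubes of side $2a/M^2$, giving $M^{2d}$ cells in total, and use Taylor polynomials centered at the subcube corners.

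The network implements this in three stages.
\begin{enumerate}
\item A first subnetwork locates the coarse cube containing $\bx$ and outputs its corner, together with the values $\partial^{\mathbf{k}}f$ at all subcube corners of that cube. These are encoded as a lookup: in the wide regime via parallel indicator-like ReLU bumps, using width $\sim 2^d M^d$.
\item A second subnetwork locates the fine subcube and extracts its corner and the corresponding derivative values. The point of the two-scale structure is that only $M^d$ items need to be handled at each scale, matching the width or depth budget.
\item A final subnetwork evaluates the degree-$q$ Taylor polynomial
\[
\sum_{|\mathbf{k}|\le q}\partial^{\mathbf{k}}f(\bx_0)\,\frac{(\bx-\bx_0)^{\mathbf{k}}}{\mathbf{k}!}.
\]
It uses Yarotsky's ReLU approximation of $x^2$ via compositions of the tent function $\tau$ (as in Telgarsky's construction), then $xy=\tfrac14((x+y)^2-(x-y)^2)$, and multiplies monomials through a binary tree of depth $\lceil\log_2(\max\{q,d\}+1)\rceil$. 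Each multiplication to accuracy $4^{-R}$ costs depth $R$, so taking $R=\lceil\log_4 M^{2p}\rceil$ explains condition (i), and the width count $\binom{d+q}{d}$ accounts for the number of monomials.
\end{enumerate}

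\emph{Deep regime.} Instead of parallel indicators, the $M^d$ cells are scanned sequentially, one per block of about $5$ layers; hence the $5M^d$ term in $L$. A constant-width set of channels carries $\bx$, the running selected corner, and the running derivative values. The $e^{4(q+1)(M^d-1)}$ factor inside the logarithm suggests the multiplication precision must be raised to compensate for error amplification across the $M^d$ sequential steps.

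The main obstacle is the boundary of the cells. Piecewise-linear ReLU networks cannot realize discontinuous indicators exactly, so near cube faces the cell-selection is wrong. I would handle this as Kohler--Langer do: first prove the bound on $[-a,a]^d$ minus a thin strip near the grid hyperplanes. Then remove the strip by averaging $2^d$ shifted copies of the grid, weighted by a partition of unity that is itself realized by ReLU networks and vanishes on each copy's bad strip. The product of such a weight with the Taylor output is again approximated by the multiplication network. This produces the factor $2^d$ in the width, and the powers $\max\{a,\|f\|_{C^q}\}^{4(q+1)}$ appear from bounding the inputs to the multiplication networks. The remaining work is bookkeeping of the accumulated multiplication errors, which is routine but lengthy.
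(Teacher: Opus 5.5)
The survey gives no proof of this theorem; it quotes it from \cite{kohler2021rate}, so I am comparing your sketch with the Kohler--Langer argument. Your outline of part (a) follows that strategy and is fine as a sketch: the two partitions into $M^d$ and $M^{2d}$ cubes, an indicator-based lookup returning the derivative values at all fine corners of the located coarse cube, Yarotsky-type multiplication for the Taylor polynomial, and $2^d$ shifted grids glued by tent-shaped weights. Be explicit about why the two-scale trick works there. The fully connected layer that maps the $M^d$ coarse-cube indicators to the $M^d\binom{d+q}{d}$ fine-corner outputs carries about $M^{2d}\binom{d+q}{d}$ weights, and those weights are where the fine-scale Taylor data are stored.

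Part (b) has a genuine gap. A network of width $O(1)$ and depth $O(M^d)$ has only $O(M^d)$ weights, but the fine piecewise Taylor approximant needs $M^{2d}\binom{d+q}{d}$ numbers. Your scan processes one cell per block of layers and carries $\bx$, a running corner and running derivative values. It therefore stores one cell's data per block, resolves only $O(M^d)$ cells (the coarse grid), and gives error of order $M^{-p}$, not $M^{-2p}$.

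Your claim that ``only $M^d$ items need to be handled at each scale, matching the width or depth budget'' fails at the fine scale. The $M^d$ fine-corner values depend on which coarse cube was selected, so the fine stage is really a lookup among $M^{2d}$ values. Part (a) pays for this with width $M^d$; $O(1)$ channels cannot carry it. Nor is this a bookkeeping issue. Error $M^{-2p}$ with $W\asymp M^d$ weights is the rate $W^{-2p/d}$. By the DeVore--Howard--Micchelli lower bound on continuous nonlinear widths, no construction whose weights are simply stored Taylor data, and hence depend continuously on $f$, can beat $W^{-p/d}$ uniformly over the smoothness class.

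What is missing is a compression mechanism of the kind used in \cite{yarotsky2018optimal}. One way to do it:
\begin{enumerate}
\item Order the fine subcubes of each coarse cube along a path of adjacent cubes.
\item Obtain the derivative values at the next corner from those at the current one by Taylor's formula. This is a linear map with fixed coefficients.
\item Store only bounded integer corrections that keep the error in $\partial^{\mathbf{l}}f$ at scale $(a/M^2)^{p-|\mathbf{l}|}$.
\item Pack the $M^d-1$ corrections for one coarse cube and one multi-index into the digits of a single real number, retrieved by the coarse scan.
\item Decode one digit per constant-depth block.
\end{enumerate}
This is what lets $O(M^d)$ weights carry $M^{2d}$ pieces of information. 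The exponent $4(q+1)(M^d-1)$ in condition (i) is the footprint of such a recursion over the $M^d-1$ transitions between neighbouring fine subcubes. Your reading of that factor as error amplification has no counterpart in your own scheme. There the scan consists of exact additions (away from the boundary strips), and the approximate multiplications are used only once, in the final Taylor evaluation.
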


Theorem~\ref{thm:kohler-langer} shows that smooth functions can be approximated with the same accuracy either by wide networks of moderate depth or by deep networks with
controlled width. It provides a nice quantitative description of the depth--width trade-off in neural network approximation.

\section{Kolmogorov--Arnold Networks}\label{sec:kan}
\emph{Kolmogorov--Arnold Networks} (KANs) are a recently proposed \citep{liu2025kan} neural network architecture explicitly inspired by the remarkable \emph{Kolmogorov--Arnold representation theorem}, which states that any multivariate continuous function can be written as a finite superposition of continuous univariate functions and addition operations \citep{kolmogorov1957representations}. Unlike feedforward neural networks that employ fixed activation functions at neurons, KANs replace each linear weight parameter with a learnable univariate activation function, often modelled using basis functions such as B-splines. In this way, KANs directly parameterise the functional components suggested by the Kolmogorov--Arnold theorem within a trainable architecture \citep{liu2025kan}.

This structural innovation has consequences for approximation theory. Compared with classical UATs for FNNs, which guarantee existence of approximating networks in principle, the Kolmogorov--Arnold approach embeds the representation theorem directly into the architecture and yields constructive approximation guarantees that reflect both function smoothness and architectural design. This makes KANs an intriguing bridge between classical approximation theory and modern deep learning practice. We shall now review the Kolmogorov--Arnold representation theorem and the basic architecture of a KAN. We shall also look at a basic approximation theorem for KANs due to \cite{liu2025kan}.

\subsection{Hilbert's 13-th problem and the Kolmogorov--Arnold representation theorem}\label{sec:kn-repr}
The Kolmogorov--Arnold representation theorem originates in the series of works \cite{kolmogorov1956representations, arnol1957functions, arnol1959representation, kolmogorov1957representations}, and resolves a variant of Hilbert's 13-th Problem (one of the twenty three problems posed at the turn of the twentieth century by the great mathematician David Hilbert \citep{Hilbert1902}). Hilbert had conjectured that solutions of general seventh-degree algebraic equations require genuinely multivariate functions in the coefficients, which implies that there exist continuous functions of three variables that can not be expressed in terms of composition and addition of functions of two variables. Building on Kolmogorov's work \citep{kolmogorov1956representations}, \cite{arnol1957functions} disproved this conjecture. Kolmogorov followed up \citep{kolmogorov1957representations} by establishing the remarkable result that every continuous function $f : [0,1]^d \to \mathbb{R}$ can be represented as a finite superposition of continuous univariate functions and addition. In the literature, this result is also often called Kolmogorov's superposition theorem (see \cite{morris2021hilbert} for an historical account).

\begin{theorem}[Kolmogorov--Arnold Representation Theorem]
Let $f \in C([0,1]^d)$. Then there exist continuous univariate functions
\[
    \varphi_{q,p} : \mathbb{R} \to \mathbb{R}, \qquad \psi_q : \mathbb{R} \to \mathbb{R},
\]
for $q = 1,\dots,2d+1$ and $p = 1,\dots,d$, such that for all $\mathbf{x} = (x_1,\dots,x_d) \in [0,1]^d$,
\begin{equation}\label{eq:ka-representation}
    f(\mathbf{x}) = \sum_{q=1}^{2d+1} \psi_q\!\left( \sum_{p=1}^{d} \varphi_{q,p}(x_p) \right).
\end{equation}
Moreover, the inner functions $\varphi_{q,p}$ can be chosen independently of $f$,
while the outer functions $\psi_q$ depend on $f$.
\end{theorem}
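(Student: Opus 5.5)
The plan is to follow the classical Kolmogorov--Lorentz--Sprecher strategy in two stages. First, construct the universal inner functions $\varphi_{q,p}$ (independent of $f$) so that the maps $\Phi_q(\mathbf{x}) := \sum_{p=1}^d \varphi_{q,p}(x_p)$, $q=1,\dots,2d+1$, have a strong separation property. Second, build the outer functions $\psi_q$ for a given $f$ as the limit of an iterative approximation scheme. Following Lorentz, I would take $\varphi_{q,p}(x) = \lambda_p\, \varphi(x + q\delta)$ for a single continuous increasing $\varphi$, a small shift $\delta$, and rationally independent positive constants $\lambda_1,\dots,\lambda_d$. Then the outer functions can also be taken as a single $\psi$ (or $\psi_q$).

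\textbf{Stage 1: the inner functions.} For each scale $k$, consider families of closed intervals of length about $10^{-k}$ separated by small gaps. Use $2d+1$ shifted copies (indexed by $q$) so that every point $x\in[0,1]$ lies in an interval of the $q$-th family for all but at most one $q$. Taking products, each $\mathbf{x}\in[0,1]^d$ then lies in a cube of the $q$-th family for at least $d+1$ of the $2d+1$ indices $q$, a strict majority. This pigeonhole count is where $2d+1$ comes from. I would construct $\varphi$ (and hence $\Phi_q$) inductively in $k$ as an increasing function such that, for each $q$ and $k$, the images $\Phi_q(S)$ of the distinct cubes $S$ of rank $k$ are pairwise disjoint intervals. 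This uses rational independence of the $\lambda_p$ to perturb the values at grid points so that all sums $\sum_p \lambda_p \varphi(\cdot)$ are distinct. The construction then refines at the next scale while keeping $\varphi$ continuous and increasing; the values at finer grid points are chosen in the gaps with tiny oscillation. I expect this to be the main obstacle: keeping disjointness of images at every level simultaneously while preserving continuity requires a careful bookkeeping of the perturbations.

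\textbf{Stage 2: the outer functions.} Fix $\theta$ with $\frac{d}{d+1}<\theta<1$, say the standard choice where $\frac{d+1}{2d+1}$ majority gives contraction. Given $g\in C([0,1]^d)$, choose $k$ so large that $g$ oscillates by less than $\epsilon\|g\|_\infty$ on each rank-$k$ cube. On the disjoint image interval $\Phi_q(S)$, define $\chi_q$ to be the constant $\frac{1}{d+1}g(\mathbf{x}_S)$, and extend $\chi_q$ continuously and linearly in the gaps with $\|\chi_q\|_\infty\le \frac{1}{d+1}\|g\|_\infty$. Then at any $\mathbf{x}$, at least $d+1$ indices give a term close to $g(\mathbf{x})/(d+1)$, and at most $d$ indices give an error bounded by $\|g\|/(d+1)$. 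Hence
\[
\Big\| g - \sum_{q=1}^{2d+1}\chi_q\circ\Phi_q \Big\|_\infty \le \theta \|g\|_\infty .
\]
Iterate this, starting from $g_0=f$ and setting $g_{r+1}=g_r-\sum_q \chi_q^{(r)}\circ\Phi_q$. The functions $\psi_q:=\sum_r \chi^{(r)}_q$ converge uniformly because $\|\chi^{(r)}_q\|\le\theta^r\|f\|/(d+1)$, so the $\psi_q$ are continuous. Since $g_r\to0$, this yields \eqref{eq:ka-representation}. Finally, extend all functions continuously to $\mathbb{R}$.
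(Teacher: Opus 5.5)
\textbf{There is a genuine gap in Stage~1.} The paper gives no proof of this theorem; it is quoted from Kolmogorov and Arnold, so there is no in-paper argument to compare against. Your outline is the classical Kolmogorov--Lorentz--Sprecher route. The single-$\varphi$-with-shifts form you use is Sprecher's; Lorentz uses separate $\varphi_q$.

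Your Stage~2 is correct. Take exactly $d+1$ indices $q$ for which $\mathbf{x}$ lies in a rank-$k$ cube, and bound each of the remaining $d$ terms by $\|g\|_\infty/(d+1)$. This gives error at most $(\frac{d}{d+1}+\epsilon)\|g\|_\infty$, and the geometric series gives continuous $\psi_q$. It uses disjointness only within each $q$. Your aside that a single $\psi$ suffices would additionally need $\Phi_q(S)\cap\Phi_{q'}(S')=\emptyset$ for $q\neq q'$.

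Stage~1 is the whole content of the theorem. You flag it as the main obstacle but do not carry it out, and the geometry you sketch cannot work.

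\emph{Disjointness.} Distinct corner sums are only half the mechanism. By monotonicity, $\Phi_q(S)$ spans the values at the two extreme corners of $S$. So the increment of $\varphi$ over \emph{every} rank-$k$ interval must lie below $\eta_k/\sum_p\lambda_p$, where $\eta_k$ is the minimal separation of the corner sums. Adding up the lengths of the $N_k^d$ disjoint images shows that the total increase $T_k$ of $\varphi$ over rank-$k$ intervals satisfies $T_k\le\Delta\,N_k^{1-d}$, where $\Delta$ is the total increase of $\varphi$. Hence for $d\ge2$ nearly all growth of $\varphi$ sits in the gaps, at every rank.

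\emph{Gap size.} Consequently the gaps cannot be small. Suppose that for all $k\ge k_0$ each rank-$k$ gap is shorter than a rank-$(k+1)$ interval. Then each such gap meets at most one rank-$(k+1)$ gap. The Stieltjes mass of $\varphi$ carried by the gaps is at least $\Delta-\sum_{k\ge k_0}T_k>0$ for $k_0$ large. That mass is then trapped in nested gaps shrinking to finitely many points, which is impossible for continuous $\varphi$.

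\emph{Scale ratio.} So, at infinitely many ranks, rank-$k$ gaps must be at least as long as rank-$(k+1)$ intervals, so that a gap's increment can be split among finer gaps. The $2d+1$ families also need pairwise disjoint gaps, of length at most $h_k/(2d+1)$ for period $h_k$. Together these force $h_{k+1}/h_k\le\frac{1}{2d}$, so a fixed scale ratio of $10$ fails for $d\ge6$.

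\emph{Shift.} With one $\varphi$ and a fixed shift, you need $\mathrm{dist}(m\delta,h_k\mathbb{Z})$ to exceed the rank-$k$ gap length for all $1\le m\le 2d$ and \emph{all} $k$ simultaneously. An arbitrary small $\delta$ does not give this.

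\emph{A working choice.} Sprecher's parameters meet all of these constraints:
base $\gamma\ge 2d+2$;
rank-$k$ intervals $[t,t+\frac{\gamma-2}{\gamma-1}\gamma^{-k}]$ for $t\in\gamma^{-k}\mathbb{Z}$;
shift $\delta=\frac{1}{\gamma(\gamma-1)}$, so that $q\delta\equiv\frac{q}{\gamma-1}\gamma^{-k}$ modulo $\gamma^{-k}$.
Each rank-$k$ gap is then exactly gap--interval--gap at rank $k+1$. Even so, making the limit continuous and monotone is delicate: Sprecher's explicit inner function had to be corrected by K\"oppen and by Braun and Griebel.

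The missing idea is this multiscale distribution of the increments of $\varphi$ across the gaps, not the bookkeeping of rational perturbations.
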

This theorem is rather surprising at first sight because it demonstrates that multivariate continuous functions do not require intrinsically multivariate building blocks: composition and addition of one-dimensional functions suffice. Although the representation is highly non-unique and the inner functions are not smooth in general, the theorem remains a  foundational result in nonlinear approximation theory and provides the conceptual underpinning for architectures inspired by superposition principles, such as KANs.

Notably, long before the advent of KANs, the Kolmogorov--Arnold theorem was widely cited in the neural networks literature, in regards to both approximation theory and neural network design \citep{hecht1987kolmogorov, girosi1989representation, kuurkova1991kolmogorov, kuurkova1992kolmogorov, pinkus1999approximation, igelnik2003kolmogorov}. 

\subsection{The KAN architecture}
\label{subsec:kan_architecture}
We begin by defining the basic building block of a KAN, namely, a KAN layer.
\begin{definition}[KAN layers]
Let \( d_{\mathrm{in}}, d_{\mathrm{out}} \in \mathbb{N} \).
A \emph{Kolmogorov--Arnold Network (KAN) layer} $\Phi$
is a collection of univariate functions
\[
    \Phi = \{ \varphi_{q,p} : \mathbb{R} \to \mathbb{R} \mid p = 1,\dots,d_{\mathrm{in}},\; q = 1,\dots,d_{\mathrm{out}} \}.
\]
\end{definition}

\begin{definition}[The action of a KAN layer]
Given an input vector
\(
\mathbf{x} = (x_1,\dots,x_{d_{\mathrm{in}}}) \in \mathbb{R}^{d_{\mathrm{in}}},
\)
a KAN layer $\Phi$ applied to $\mathbf{x}$ produces an output $\mathbf{y} = (y_1,\dots,y_{d_{\mathrm{out}}}) \in \mathbb{R}^{d_{\mathrm{out}}}$ defined componentwise by
\begin{equation}\label{eq:kan_layer}
    y_q = \sum_{p=1}^{d_{\mathrm{in}}} \varphi_{q,p}(x_p), \qquad q = 1,\dots,d_{\mathrm{out}}.
\end{equation}
We write this succinctly as $\mathbf{y} = \Phi(\mathbf{x})$, identifying the KAN layer $\Phi$ (a collection of univariate functions) with the map $\Phi : \bbR^{d_{\mathrm{in}}} \to \bbR^{d_{\mathrm{out}}}$ defined by \eqref{eq:kan_layer}.
\end{definition}

\begin{figure}[!t]
    \centering
    \begin{tikzpicture}[
    scale=0.9,
    >=stealth,
    neuron/.style={circle, draw, minimum size=1cm},
    input/.style={neuron, fill=green!20},
    hidden/.style={neuron, fill=blue!20},
    output/.style={neuron, fill=red!20},
    edge/.style={->, line width=0.6pt}
]

\node[input]  (x1) at (0,  0.0) {$x_1$};
\node[input]  (x2) at (0, -1.6) {$x_2$};
\node at (0,-3.0) {$\vdots$};
\node[input]  (xn) at (0, -4.4) {$x_{d_0}$};
\node[above] at (0, 0.85) {Input};

\node[hidden] (h11) at (3.5,  0.0) {$+$};
\node[hidden] (h12) at (3.5, -1.6) {$+$};
\node at (3.5,-3.0) {$\vdots$};
\node[hidden] (h1n) at (3.5, -4.4) {$+$};
\node[above] at (3.5, 0.85) {Layer $1$};

\node[hidden] (h21) at (7.0,  0.0) {$+$};
\node[hidden] (h22) at (7.0, -1.6) {$+$};
\node at (7.0,-3.0) {$\vdots$};
\node[hidden] (h2n) at (7.0, -4.4) {$+$};
\node[above] at (7.0, 0.85) {Layer $2$};

\node (e1) at (10.2,  0.0) {$\cdots$};
\node (e2) at (10.2, -1.6) {$\cdots$};
\node (e3) at (10.2, -3.0) {$\ddots$};
\node (en) at (10.2, -4.4) {$\cdots$};

\node[hidden] (hL1) at (13.5,  0.0) {$+$};
\node[hidden] (hL2) at (13.5, -1.6) {$+$};
\node at (13.5,-3.0) {$\vdots$};
\node[hidden] (hLn) at (13.5, -4.4) {$+$};
\node[above] at (13.5, 0.85) {Layer $L$};

\node[output] (o1) at (16,  0.0) {$y_1$};
\node[output] (o2) at (16, -1.6) {$y_2$};
\node at (16,-3.0) {$\vdots$};
\node[output] (on) at (16, -4.4) {$y_{d_L}$};
\node[above] at (16, 0.85) {Output};

\draw[edge] (hL1) -- (o1);
\draw[edge] (hL2) -- (o2);
\draw[edge] (hLn) -- (on);

\draw[edge] (x1) -- (h11) node[midway, above, font=\small] {$\varphi^{(0)}_{1,1}$};
\draw[edge] (x2) -- (h11);
\draw[edge] (xn) -- (h11);
\draw[edge] (x1) -- (h12);
\draw[edge] (x2) -- (h12) node[midway, above, font=\small] {$\varphi^{(0)}_{2,2}$};
\draw[edge] (xn) -- (h12);
\draw[edge] (x1) -- (h1n);
\draw[edge] (x2) -- (h1n);
\draw[edge] (xn) -- (h1n) node[midway, above, font=\small] {$\varphi^{(0)}_{d_1,d_0}$};

\draw[edge] (h11) -- (h21) node[midway, above, font=\small] {$\varphi^{(1)}_{1,1}$};
\draw[edge] (h12) -- (h21);
\draw[edge] (h1n) -- (h21);

\draw[edge] (h11) -- (h22);
\draw[edge] (h12) -- (h22) node[midway, above, font=\small] {$\varphi^{(1)}_{2,2}$};
\draw[edge] (h1n) -- (h22);

\draw[edge] (h11) -- (h2n);
\draw[edge] (h12) -- (h2n);
\draw[edge] (h1n) -- (h2n) node[midway, above, font=\small] {$\varphi^{(1)}_{d_2,d_1}$};

\draw[edge] (h21) -- (e1) node[midway, above, font=\small] {$\varphi^{(2)}_{1,1}$};
\draw[edge] (h22) -- (e1);
\draw[edge] (h2n) -- (e1);

\draw[edge] (h21) -- (e2);
\draw[edge] (h22) -- (e2) node[midway, above, font=\small] {$\varphi^{(2)}_{2,2}$};
\draw[edge] (h2n) -- (e2);

\draw[edge] (h21) -- (en);
\draw[edge] (h22) -- (en);
\draw[edge] (h2n) -- (en) node[midway, above, font=\small] {$\varphi^{(2)}_{d_3,d_2}$};

\draw[edge] (e1) -- (hL1) node[midway, above, font=\small] {$\varphi^{(L - 1)}_{1,1}$};
\draw[edge] (e2) -- (hL1);
\draw[edge] (en) -- (hL1);

\draw[edge] (e1) -- (hL2);
\draw[edge] (e2) -- (hL2) node[midway, above, font=\small] {$\varphi^{(L - 1)}_{2,2}$};
\draw[edge] (en) -- (hL2);

\draw[edge] (e1) -- (hLn);
\draw[edge] (e2) -- (hLn);
\draw[edge] (en) -- (hLn) node[midway, above, font=\small] {$\varphi^{(L - 1)}_{d_{L - 1}, d_L}$};


\end{tikzpicture}
    \caption{Schematic of a KAN with $L$ layers: each edge here represents a learnable (typically nonlinear) univariate function $\varphi^{(l)}_{j,i}$, and each blue node aggregates by summation. Thus, unlike FNNs, nonlinearities are associated with edges rather than nodes.}
    \label{fig:kan-schematic}
\end{figure}

\begin{definition}[The KAN architecture]
A KAN of depth $L$ and shape $[d_0, d_1, \dots, d_L]$ consists of a sequence of KAN layers
\[
    \Phi_{l} : \mathbb{R}^{d_l} \to \mathbb{R}^{d_{l+1}},
\qquad l = 0,\dots,L-1 .
\]
Let $\mathbf{x}^{(0)} = \mathbf{x}$ and $\mathbf{x}^{(L)} = \mathbf{y}$. The forward propagation through the network is defined recursively by
\[
    \mathbf{x}^{(l+1)} := \Phi_{l}(\mathbf{x}^{(l)}),
\qquad l = 0,\dots,L-1 .
\]
The function realised by the network is
\begin{equation}\label{eq:kan_composition}
    \mathbf{y} = \mathrm{KAN}(\mathbf{x}) := (\Phi_{L-1} \circ \Phi_{L-2} \circ \cdots \circ \Phi_0)(\mathbf{x}).
\end{equation}
\end{definition}
A schematic of a KAN of depth $L$ and shape $[d_0, d_1, \ldots, d_L]$ is given in Figure~\ref{fig:kan-schematic}. Each univariate function $\varphi_{q,p}^{(l)}$ of a KAN layer $\Phi_l$ is associated with the directed edge connecting input coordinate $p$ to output coordinate $q$. The blue nodes perform addition.

\begin{remark}[Connection to Kolmogorov--Arnold representation] Note that the Kolmogorov--Arnold representation \eqref{eq:ka-representation} of 
the function $f : [0,1]^{d} \to \mathbb{R}$ is exactly a KAN with depth $L = 2$ and shape $[d, 2d + 1, 1]$. Thus, the Kolmogorov--Arnold representation theorem can be equivalently interpreted as the statement that any continuous function on a compact domain can be realised by a KAN of appropriate finite width and depth.
\end{remark}

A key innovation in the KAN architecture is depth. Although the Kolmogorov--Arnold theorem allows a depth-$2$ representation, the involved functions may be rather non-smooth. By allowing a deeper architecture, one may find representations with smooth components. \cite{liu2025kan} highlight this point with the following example function:
\[
    f(x_1, x_2, x_3, x_4) = \exp(\sin(x_1^2 + x_2^2) + \sin(x_3^2 + x_4^2)).
\]
This function can be represented as a depth-$3$ KAN with shape $[4, 2, 1, 1]$ and smooth components. However, there may not exist a depth-$2$ KAN representing it with smooth activations.

In contrast to standard multilayer perceptrons, where scalar weights are followed by a fixed activation function, KANs replace scalar weights with learnable univariate functions, thereby applying nonlinearities prior to aggregation and yielding a functionally expressive architecture. Typically, each univariate function is parameterised using B-spline functions\footnote{We will not define B-splines here and refer the readers to the books \cite{de1978practical, schumaker2007spline} for their definition and properties.} defined on a finite grid, which enables controlled approximation properties. The orders of these B-spline functions control the smoothness of the resulting KAN, and the coefficients of the B-spline basis expansions are the trainable parameters. The grid sizes are hyperparameters that act as regularisers.

\subsection{A basic approximation theorem for KANs}\label{sec:kan-uat}
We consider functions that admit exact representations as compositions of KAN layers, where each univariate function is sufficiently smooth. The following theorem provides an error bound on the approximation of such functions by spline-based KANs.

\begin{theorem}[An approximation theorem for KANs, Theorem~2.1 of \cite{liu2025kan}]
\label{thm:kan}
Suppose that $f : \bbR^d \to \bbR$ admits a representation as a KAN of depth $L$:
\begin{equation}\label{eq:f_representation}
    f(\mathbf{x}) = (\Phi_{L-1} \circ \Phi_{L-2} \circ \cdots \circ \Phi_1 \circ \Phi_0)(\mathbf{x}),
\end{equation}
where every edge function $\varphi^{(l)}_{q,p}$ is $(k+1)$-times continuously differentiable.

Then there exists a constant $C_f > 0$, depending only on $f$ and the representation \eqref{eq:f_representation}, such that for any $G \in \mathbb{N}$, there exist spline-based KAN layers $\Phi_l^G$, $0 \le l \le L - 1$, made of $B$-spline functions of order $k  + 1$, satisfying, for every integer $q$ with $0 \le q \le k$,
\begin{equation}\label{eq:kat_bound}
    \left\|f - \Phi_{L-1}^G \circ \Phi_{L-2}^G \circ \cdots \circ \Phi_1^G \circ \Phi_0^G \right\|_{C^q} \;\le\; C_f \cdot G^{-k-1+q}.
\end{equation}
\end{theorem}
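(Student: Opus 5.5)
The plan is to combine classical univariate spline approximation with a telescoping argument over the layers.

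\textbf{Step 1: compact domains.} First fix compact intervals on which everything lives. The input ranges over a compact set; in the paper's setting, take $[0,1]^d$ or a bounded box. Define the intermediate activations $\mathbf{x}^{(l)} = \Phi_{l-1}\circ\cdots\circ\Phi_0(\mathbf{x})$. By continuity, each coordinate $x^{(l)}_p$ ranges over a compact interval $I^{(l)}_p$. Enlarge each to $\tilde I^{(l)}_p := I^{(l)}_p + [-1,1]$, so that slightly perturbed inputs still lie in a fixed compact set.

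\textbf{Step 2: univariate approximation.} On each enlarged interval, invoke the standard spline approximation result (e.g.\ \cite{de1978practical, schumaker2007spline}). For a $C^{k+1}$ function $\varphi$ on an interval with a uniform grid of size $G$, there is a spline $\varphi^G$ of order $k+1$ (quasi-interpolant) with
\[
\|\varphi-\varphi^G\|_{C^r(\tilde I)} \le c_k\, \|\varphi\|_{C^{k+1}(\tilde I)}\, G^{-k-1+r}, \qquad 0\le r\le k .
\]
Apply this to every edge function $\varphi^{(l)}_{q,p}$ to get the spline layers $\Phi^G_l$. As a consequence, the $C^r$ norms of the $\varphi^G$ are bounded uniformly in $G$, for $r\le k$.

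\textbf{Step 3: telescoping.} Define the partial compositions $R_l := \Phi_{L-1}\circ\cdots\circ\Phi_{l}$ (exact tail) and $S^G_l := \Phi^G_{l-1}\circ\cdots\circ\Phi^G_0$ (approximate head). Then
\[
f - S^G_L = \sum_{l=0}^{L-1}\Bigl( R_{l+1}\circ \Phi_l\circ S^G_l - R_{l+1}\circ\Phi^G_l\circ S^G_l \Bigr),
\]
with the conventions $R_L=\mathrm{id}$ and $S^G_0=\mathrm{id}$.

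By induction on $l$, with the $C^0$ case first, one shows that $S^G_l$ stays within distance $O(G^{-k-1})$ of the exact intermediate map. In particular, for $G$ large its image lies in the enlarged box from Step 1. Small $G$ can be absorbed into the constant $C_f$, since there the left side of \eqref{eq:kat_bound} is bounded.

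\textbf{Step 4: the $C^q$ estimate.} Each summand has the form $R_{l+1}\circ(u+h)-R_{l+1}\circ u$ with $h=(\Phi_l-\Phi^G_l)\circ S^G_l$. For the $C^0$ bound, use the mean value theorem with the Lipschitz constant of $R_{l+1}$. For $q\ge 1$, differentiate using the multivariate Fa\`a di Bruno formula. The derivatives of $S^G_l$ up to order $q$ are bounded uniformly in $G$ by Step 2. The derivatives of $R_{l+1}$ up to order $q+1\le k+1$ are bounded on the compact box. Writing
\[
R_{l+1}(u+h)-R_{l+1}(u)=\int_0^1 DR_{l+1}(u+th)\,h\,dt
\]
and differentiating $q$ times, each term contains some derivative $\partial^{j}h$ with $j\le q$. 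That factor has size $O(G^{-k-1+j})\le O(G^{-k-1+q})$, and all other factors are $O(1)$.

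\textbf{Main obstacle.} The hardest part is this bookkeeping in $C^q$: controlling derivatives of compositions of many near-identity perturbations while ensuring the spline approximants' own derivatives remain uniformly bounded. I expect to rely on the simultaneous-approximation property of spline quasi-interpolants, which gives the bounds for all $r\le k$ at once, and to collect every constant into $C_f$.
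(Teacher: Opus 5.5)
Your argument is correct, but it telescopes in the mirror-image order to the paper's. The paper's $R_l$ keeps layers $0,\dots,l-1$ exact, compares $\Phi_l$ with $\Phi^G_l$, and then applies the \emph{spline} tail $\Phi^G_{L-1}\circ\cdots\circ\Phi^G_{l+1}$. You instead keep an approximate head $S^G_l$ and push the layer-$l$ error through the \emph{exact} tail $R_{l+1}$.

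The paper's ordering has one advantage. The layer-$l$ discrepancy is evaluated at the exact intermediate values $\Phi_{l-1}\circ\cdots\circ\Phi_0(\bx)$, whose range is fixed in advance. So the per-edge spline estimate applies directly in the composed form stated there, with no induction on where the inputs land.

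Its cost is that the error must then be carried through a composition of splines. Via the mean value theorem and Fa\`a di Bruno, this needs uniform-in-$G$ bounds on up to $q+1$ derivatives of the spline layers. These bounds are needed on a neighbourhood of the ranges, since the perturbed points $\Phi^G_l\circ\Phi_{l-1}\circ\cdots\circ\Phi_0(\bx)$ may slightly leave them. At $q=k$ this asks for $k+1$ derivatives of degree-$k$ splines, which are only $C^{k-1}$. The paper's sketch asserts $\|R_l\|_{C^q}\le C_l G^{-k-1+q}$ without spelling out either point.

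Your ordering puts the genuinely $C^{k+1}$ maps on the outside. You then need only $q+1\le k+1$ derivatives of $R_{l+1}$, and at most $q\le k$ derivatives of the splines. The latter is exactly what simultaneous approximation by quasi-interpolants supplies uniformly in $G$. The price is your Steps 1 and 3: the enlarged boxes, the $C^0$ stability induction keeping $S^G_l$ within $O(G^{-k-1})$ of the exact intermediate map, and absorbing the finitely many small $G$ into $C_f$. These make explicit the domain issue that the paper's footnote passes over.

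One caveat applies to both routes. B-splines of order $k+1$ with simple knots are only $C^{k-1}$, so the case $q=k$ of \eqref{eq:kat_bound} must be read with piecewise (i.e.\ $W^{k,\infty}$-type) $k$-th derivatives. Your Step~2 bound for $r=k$ should be understood the same way.
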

\begin{proof}
The proof is simple. Standard results in the theory of spline approximation (see, e.g., Section 6.4 of \cite{ schumaker2007spline}) tell us that there exist B-spline functions $\Phi^{(l), G}_{i,j}$ of order $k + 1$, constructed on a grid\footnote{Naturally, this grid has to be a partition of an interval covering the range of $(\Phi_{l-1} \circ \Phi_{l-2} \circ \cdots \circ \Phi_{1} \circ \Phi_{0})(\mathbf{x})$ as $\mathbf{x}$ varies over $[0, 1]^d$.} of mesh size $G^{-1}$, such that for any $0 \le q \le k$, 
\[
    \big\|(\Phi^{(l)}_{i,j} \circ \Phi_{l-1} \circ \cdots \circ \Phi_{1} \circ \Phi_{0})(\mathbf{x}) - (\Phi^{(l), G}_{i,j} \circ \Phi_{l-1} \circ \cdots \circ \Phi_{1} \circ \Phi_{0})(\mathbf{x})\big\|_{C^q} \le C_{l, i, j} \cdot G^{-k -1 + q},
\]
with a constant $C_{l, i, j}$ independent of $G$. We choose and fix these B-spline approximations in order of growing $l$. Then, for each $0 \le l \le L - 1$, one has that
\[
    R_l:= (\Phi^G_{L-1}\circ\cdots\circ\Phi^G_{l+1}\circ\Phi_{l}\circ\Phi_{l-1}\circ\cdots\circ\Phi_{0})(\mathbf{x}) -(\Phi_{L-1}^G\circ\cdots\circ\Phi_{l+1}^G\circ\Phi_{l}^G\circ\Phi_{l-1}\circ\cdots\circ\Phi_{0})(\mathbf{x})
\]
satisfies 
\[
    \|R_l\|_{C^q}\leq C_l \cdot G^{-k-1+q},
\]
with a constant $C_l$ that does not depend on $G$. Finally, one obtains \eqref{eq:kat_bound} by using the triangle inequality on the telescopic decomposition 
\[
    f - (\Phi^G_{L-1}\circ\Phi^G_{L-2}\circ\cdots\circ\Phi^G_{1}\circ\Phi^G_{0})(\mathbf{x})=R_{L-1}+R_{L-2}+\cdots+R_1+R_0. \qedhere
\]
\end{proof}

Theorem~\ref{thm:kan} shows that spline-based KANs with a sufficiently large grid size $G$ can approximate any function representable as a deep and smooth KAN quite well with a rate that is \emph{independent of the dimension} (which is natural since splines are being used to approximate univariate functions). Of course, the constant $C_f$ is dependent on the representation \eqref{eq:f_representation}; hence it potentially depends on the dimension $d$.

\begin{remark}
The recent work \cite{kratsios2025kolmogorov}  shows that residual KAN models can optimally approximate functions in Besov spaces $B^s_{p, q}(\mathcal{X})$ on bounded domains (or even fractal domains) $\mathcal{X} \subset \bbR^d$, with respect to weaker Besov norms $B^{\alpha}_{p, q}(\mathcal{X})$, $\alpha < s$.
\end{remark}

\section{Concluding remarks}\label{sec:conc}

Despite significant progress in understanding the expressivity of neural networks, many important questions remain open. We highlight a few broad directions.

While specific separation results demonstrate exponential benefits of depth for carefully constructed function classes, a comprehensive theory for general function classes, particularly in high dimensions, is still lacking. Precise quantification of depth--width trade-offs for smooth, compositional, or structured function classes remains an open challenge, and it is unclear to what extent existing results reflect fundamental phenomena versus artefacts of the particular constructions used.

Most theoretical results reviewed here focus on the feedforward architecture with ReLU or sigmoidal activations. The approximation-theoretic properties of convolutional networks (CNNs) and more modern architectures, such as transformers with attention mechanisms, normalisation layers, and graph neural networks, are far less well understood, and developing a comparably mature theory for these architectures is an important open problem.

The approximation theory of KANs is also in its early stages. While KANs offer an appealing alternative architectural paradigm grounded in the Kolmogorov--Arnold representation theorem, a rigorous and comprehensive comparison with classical FNNs in terms of approximation rates and parameter efficiency remains to be carried out.

Addressing these questions will not only deepen our theoretical understanding of neural networks but also inform the design of architectures that are both expressive and computationally efficient in practice.

\bibliographystyle{apalike}
\bibliography{refs.bib}
\end{document}